\def\eqref#1{equation~\ref{#1}}
\def\1{\bm{1}}
\DeclareMathAlphabet{\mathsfit}{\encodingdefault}{\sfdefault}{m}{sl}
\SetMathAlphabet{\mathsfit}{bold}{\encodingdefault}{\sfdefault}{bx}{n}
\newtheorem*{proposition}{Proposition}
\title{Projected Latent Markov Chain Monte Carlo: Conditional Sampling of Normalizing Flows}
\author{Chris Cannella, Mohammadreza Soltani \& Vahid Tarokh\\
Department of Electrical and Computer Engineering\\
Duke University\\
}
\begin{document}

\maketitle

\begin{abstract}
We introduce Projected Latent Markov Chain Monte Carlo (PL-MCMC), a technique for sampling from the exact conditional distributions learned by normalizing flows. As a conditional sampling method, PL-MCMC enables Monte Carlo Expectation Maximization (MC-EM) training of normalizing flows from incomplete data. Through experimental tests applying normalizing flows to missing data tasks for a variety of data sets, we demonstrate the efficacy of PL-MCMC for conditional sampling from normalizing flows.
\end{abstract}

\section{Introduction}

Conditional sampling from modeled joint probability distributions offers a statistical framework for approaching tasks involving missing and incomplete data.  Deep generative models have demonstrated an exceptional capability for approximating the distributions governing complex data.  Brief analysis illustrates a fundamental guarantee for generative models: the inaccuracy (i.e. divergence from ground truth) of a generative model's approximated joint distribution upper bounds the expected inaccuracies of the conditional distributions known by the model, as shown in Appendix \ref{app:a}.   Although this guarantee holds for all generative models, specialized variants are typically used to approach tasks involving the conditional distributions among modeled variables, due to the difficulty in accessing the conditional distributions known by unspecialized generative models.  Quite often, otherwise well trained generative models possess a capability for conditional inference that is regrettably locked away from our access.

Normalizing flow architectures like RealNVP \citep{dinh2014nice} and GLOW \citep{kingma2018glow} have demonstrated accurate and expressive generative performance, showing great promise for application to missing data tasks.  Additionally, by enabling the calculation of exact likelihoods, normalizing flows offer convenient mathematical properties for approaching exact conditional sampling.  We are therefore motivated to develop techniques for sampling from the exact conditional distributions known by normalizing flows.  In this paper, we propose Projected Latent Markov Chain Monte Carlo (PL-MCMC), a conditional sampling technique that takes advantage of the convenient mathematical structure of normalizing flows by defining a Markov Chain within a flow's latent space and accepting proposed transitions based on the likelihood of the resulting imputation.  In principle, PL-MCMC enables exact conditional sampling without requiring specialized architecture, training history, or external inference machinery. 

\paragraph{Our Contributions:} We prove that a Metropolis-Hastings implementation of our proposed PL-MCMC technique is asymptotically guaranteed to sample from the exact conditional distributions known by any normalizing flow satisfying very mild positivity and smoothness requirements.  We then describe how to use PL-MCMC to perform Monte Carlo Expectation Maximization (MC-EM) training of normalizing flows from incomplete training data. To illustrate and demonstrate aspects of the technique, we perform a series of experiments utilizing PL-MCMC to complete CIFAR-10 images, CelebA images, and MNIST digits affected by missing data.  Finally, we perform a series of experiments training non-specialized normalizing flows to model MNIST digits and continuous UCI datasets from incomplete training data to verify the performance of the proposed method.  Through these experimental results, we find that PL-MCMC holds great practical promise for tasks requiring conditional sampling from normalizing flows.

\section{Related Work}

A conditional variant of normalizing flows has been introduced by \citet{lu2020structured} to model a single conditional distribution between architecturally fixed sets of conditioned and conditioning variables.  While quite capable of learning individual conditional distributions, conditional variants do not enable arbitrary conditional sampling from a joint model.  \citet{richardson2020mcflow} concurrently train a deterministic inference network alongside a normalizing flow for inferring missing data.  Although such an inference network can produce deterministic imputations consistent with the distributions learned by a normalizing flow, it cannot stochastically sample from the conditional distributions known by the flow.  \citet{li2019flow} introduce shared parameter approximations that allow the derivation of approximate conditional normalizing flows, though these approximations do not guarantee exact sampling from the conditional distributions of a particular joint model.  Similar techniques for approaching missing data with other generative models, such as generative adversarial networks (GANs) and variational auto-encoders (VAEs), have been introduced with similar limitations \citep{ivanov2018variational, yoon2018gain, li2018misgan}.

A MCMC procedure for sampling from the conditional distributions of VAEs has been introduced by \citet{rezende2014stochastic} and refined by \citet{mattei2018leveraging}. This procedure fundamentally relies on the many-to-many relationship between the latent and modeled data spaces of VAEs, and cannot be directly applied to normalizing flows, wherein the latent state uniquely determines (and is uniquely determined by) the modeled data state. By following an unconstrained Markov Chain within the latent space, PL-MCMC mirrors this VAE conditional sampling procedure within the context of normalizing flows. 

PL-MCMC leverages the probabilistic structure learned by a normalizing flow to produce efficient Markov Chains.  The utility of the mathematical structure of normalizing flows for approaching Monte Carlo estimation via independence sampling has been demonstrated by \citet{muller2019neural}.  The probabilistic structure of normalizing flows has also been shown to improve unconditional sampling from externally defined distributions by \citet{hoffman2019neutra}.  In using this learned structure, we believe that PL-MCMC receives many of the benefits of Adaptive Monte Carlo methods \citep{haario2001adaptive, foreman2013emcee, zhu2019sample}, as explained in Appendix \ref{app:B}.  

PL-MCMC's unconstrained Markov Chain through the latent space is not the only conceivable option for sampling from the conditional distributions described by normalizing flows.  As normalizing flows enable exact joint likelihood calculations, we could employ MCMC methods through the modeled data space.  \citet{dinh2014nice} demonstrate a stochastic conditional MAP inference that can be adapted to implement the unadjusted Langevin algorithm \citep{fredrickson2006equilibrium, durmus2019high} or the Metropolis adjusted Langevin algorithm \citep{grenander1994representations}. A constrained Hamiltonian Monte Carlo approach has also been introduced in the context of conditional sampling from generative models by \citet{graham2017asymptotically}. MCMC methods restricted to the modeled data space approach the normalizing flow as a sort of blackbox oracle to be used only for calculations regarding data likelihood. By design, PL-MCMC leverages the flow's one-to-one mapping between latent and modeled data spaces, thereby taking better advantage of the probabilistic structure learned by our normalizing flows to perform conditional sampling.

\section{The PL-MCMC Approach}
We consider a normalizing flow between latent space $\Xi$ and modeled data space $\mathcal{X}$, defining the mappings $f_{\theta}: \Xi \mapsto \mathcal{X}$ and $f^{-1}_{\theta}: \mathcal{X} \mapsto \Xi$.  This normalizing flow imposes the probability density $p_{f,\theta}(\mathbf{x})$ onto all modeled data values $\mathbf{x} \in \mathcal{X}$. By the pairing $(\mathbf{x}_{M};\mathbf{x}_{O})$, we denote the missing and observed portion of a modeled data value with joint density $p_{f, \theta}(\mathbf{x}_{M};\mathbf{x}_{O})$ under our normalizing flow. 
Our goal is to sample from the conditional density described by the normalizing flow, $p_{f, \theta}(\mathbf{x}_{M}|\mathbf{x}_{O})$.  

\subsection{The Projected Latent Target Distribution}
\label{sub:pl_dist}

Rather than targeting the conditional distribution of missing values directly, PL-MCMC targets a distribution of latent variables that, after mapping through the flow's transformation, marginalizes to the desired conditional distribution.  Let the Markov Chain be composed of latent state $\boldsymbol\xi \in \Xi$, mapping to the modeled data pair $f_{\theta}(\boldsymbol\xi) =(\mathbf{y}_{M};\mathbf{y}_{O})$.  Let $q$ be an arbitrary smooth density over observed variables, $\mathbf{y}_{O}$.  PL-MCMC targets the distribution whose (unnormalized) density within the modeled data space is $q(\mathbf{y}_{O})p_{f,\theta}(\mathbf{y}_{M}|\mathbf{x}_{O})$.  Fundamentally, PL-MCMC is a marginal MCMC method \citep{van2010marginal} that uses the otherwise observed attributes, $\mathbf{y}_{O}$, as auxiliary working variables to take full advantage of the probabilistic structure learned by the normalizing flow.

\subsection{Description of Metropolis-Hastings PL-MCMC Algorithm}

  For a Metropolis-Hastings implementation of PL-MCMC, we introduce a transition kernel $g(\boldsymbol\xi' | \boldsymbol\xi)$ for generating proposal latent states. We sample a new proposal latent vector $\boldsymbol\xi'\sim g(\boldsymbol\xi'|\boldsymbol\xi)$, mapping to the modeled data pair   $f_{\theta}(\boldsymbol\xi') = (\mathbf{y'}_{M};\mathbf{y'}_{O})$.  
An illustrative diagram of the production of PL-MCMC proposals is provided in Appendix \ref{app:B}. This proposal is then accepted with probability:

\begin{equation*}
    \alpha = \text{min}(1, \ \frac{q(\mathbf{y'}_{O})p_{f,\theta}(\mathbf{y'}_{M};\mathbf{x}_{O})g(\boldsymbol\xi | \boldsymbol\xi')|\det \frac{\partial f_{\theta}}{\partial \boldsymbol\xi'} |}{q(\mathbf{y}_{O})p_{f,\theta}(\mathbf{y}_{M};\mathbf{x}_{O})g(\boldsymbol\xi' | \boldsymbol\xi)|\det \frac{\partial f_{\theta}}{\partial \boldsymbol\xi}|}).
\end{equation*}

\begin{algorithm}[]
\SetInd{0.2cm}{0.4cm}
 \caption{PL-MCMC Metropolis-Hastings Update}
 \label{algo1}
\KwIn{Observed data $\mathbf{x}_{O}$, normalizing flow $f_{\theta}$, modeled joint density $p_{f, \theta}(\mathbf{x}_{M};\mathbf{x}_{O})$.  Latent transition kernel $g(\boldsymbol\xi' | \boldsymbol\xi)$ and auxiliary density $q(\mathbf{y}_{O})$. Initial latent state $\boldsymbol\xi$}
Sample $\boldsymbol\xi' \sim g(\boldsymbol\xi' | \boldsymbol\xi)$;\\
$\mathbf{y}_{M};\mathbf{y}_{O} \leftarrow f_{\theta}(\boldsymbol\xi) $;\\
$\mathbf{y'}_{M};\mathbf{y'}_{O} \leftarrow f_{\theta}(\boldsymbol\xi') $;\\
$\alpha \leftarrow \text{min}(1, \ \frac{q(\mathbf{y'}_{O})p_{f,\theta}(\mathbf{y'}_{M};\mathbf{x}_{O})g(\boldsymbol\xi | \boldsymbol\xi')|\det \frac{\partial f_{\theta}}{\partial \boldsymbol\xi'} |}{q(\mathbf{y}_{O})p_{f,\theta}(\mathbf{y}_{M};\mathbf{x}_{O})g(\boldsymbol\xi' | \boldsymbol\xi)|\det \frac{\partial f_{\theta}}{\partial \boldsymbol\xi}|})$;\\
Sample $u \sim \text{Uniform}[0,1]$;\\
\uIf{$u < \alpha$}{
    $\boldsymbol\xi \leftarrow \boldsymbol\xi'$;\\
}
\end{algorithm}

\subsection{Theoretical Justification of the Algorithm}

\begin{proposition}
For a given $\mathbf{x}_{O}$, if $g(\boldsymbol\xi' | \boldsymbol\xi)$, $p_{f, \theta}(\mathbf{y}_{M};\mathbf{y}_{O})$, and $q(\mathbf{y}_{O})$ are positive for any choice of $(\mathbf{y}_{M};\mathbf{y}_{O})\in \mathcal{X}$ and $\boldsymbol\xi', \boldsymbol\xi \in \Xi$ and are the densities for absolutely continuous distributions, the PL-MCMC update procedure listed in Algorithm \ref{algo1} yields a Markov Chain of latent states $\boldsymbol\xi$ whose corresponding modeled data pairs, $f_{\theta}(\boldsymbol\xi) = (\mathbf{y}_{M};\mathbf{y}_{O})$ ,
converge to a distribution with $\mathbf{y}_{M}$ having marginal density $p_{f,\theta}(\mathbf{y}_{M}|\mathbf{x}_{O})$.
\end{proposition}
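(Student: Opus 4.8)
The plan is to recognize Algorithm \ref{algo1} as an ordinary Metropolis-Hastings sampler running in the latent space $\Xi$, whose invariant distribution—once pushed forward through $f_\theta$—has the advertised $\mathbf{y}_M$-marginal. First I would make explicit the target that the latent chain is meant to explore. In the modeled data space the intended (unnormalized) target is $\pi_{\mathcal X}(\mathbf{y}_M;\mathbf{y}_O)\propto q(\mathbf{y}_O)\,p_{f,\theta}(\mathbf{y}_M|\mathbf{x}_O)$, as in Section \ref{sub:pl_dist}. Since $f_\theta$ is a diffeomorphism, pulling this back via the change-of-variables formula yields the latent-space target
\begin{equation*}
\pi_\Xi(\boldsymbol\xi) \;\propto\; \pi_{\mathcal X}\bigl(f_\theta(\boldsymbol\xi)\bigr)\,\Bigl|\det \tfrac{\partial f_\theta}{\partial \boldsymbol\xi}\Bigr| \;\propto\; q(\mathbf{y}_O)\,p_{f,\theta}(\mathbf{y}_M;\mathbf{x}_O)\,\Bigl|\det\tfrac{\partial f_\theta}{\partial \boldsymbol\xi}\Bigr|,
\end{equation*}
where I have used that $p_{f,\theta}(\mathbf{y}_M;\mathbf{x}_O)\propto p_{f,\theta}(\mathbf{y}_M|\mathbf{x}_O)$ as a function of the chain state (the factor $p_{f,\theta}(\mathbf{x}_O)$ is a constant). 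The design point is that this pullback is constructed precisely so that a latent sample drawn from $\pi_\Xi$ maps under $f_\theta$ to a data-space sample from $\pi_{\mathcal X}$.

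Second, I would verify that the acceptance probability $\alpha$ in Algorithm \ref{algo1} is exactly the Metropolis-Hastings ratio for $\pi_\Xi$ under the proposal kernel $g$. Writing the standard ratio $\min\!\bigl(1,\ \tfrac{\pi_\Xi(\boldsymbol\xi')\,g(\boldsymbol\xi|\boldsymbol\xi')}{\pi_\Xi(\boldsymbol\xi)\,g(\boldsymbol\xi'|\boldsymbol\xi)}\bigr)$ and substituting the displayed $\pi_\Xi$, the proportionality constants and the $p_{f,\theta}(\mathbf{x}_O)$ factors cancel, and the two Jacobian determinants reappear in numerator and denominator exactly as in the stated $\alpha$. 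This identification is a direct algebraic substitution, which I would not belabor. It immediately gives detailed balance: the transition kernel assembled from $g$ and this $\alpha$ is reversible with respect to $\pi_\Xi$, so $\pi_\Xi$ is a stationary distribution of the chain.

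Third, I would upgrade stationarity to convergence by invoking the standard ergodic theory for Metropolis-Hastings chains on general state spaces. Here the hypotheses of the proposition do the work: because $g(\boldsymbol\xi'|\boldsymbol\xi)$, $p_{f,\theta}$, and $q$ are everywhere positive and are densities of absolutely continuous distributions, every measurable set of positive Lebesgue measure is reachable in a single step with positive probability, so the chain is $\pi_\Xi$-irreducible; the positive probability of rejection supplies aperiodicity, and positivity together with the finiteness of $\pi_\Xi$ yields Harris recurrence. A standard convergence theorem then gives that the law of $\boldsymbol\xi$ converges to $\pi_\Xi$. Finally, by the continuous mapping theorem applied to the smooth $f_\theta$, the induced pairs $f_\theta(\boldsymbol\xi)=(\mathbf{y}_M;\mathbf{y}_O)$ converge to $\pi_{\mathcal X}$; marginalizing out $\mathbf{y}_O$ and using $\int q(\mathbf{y}_O)\,d\mathbf{y}_O=1$ leaves $\mathbf{y}_M$ with density exactly $p_{f,\theta}(\mathbf{y}_M|\mathbf{x}_O)$, as claimed.

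I expect the main obstacle to be this third step rather than the algebra: carefully deducing irreducibility, aperiodicity, and Harris recurrence from the bare positivity and absolute-continuity assumptions so that a general-state-space convergence theorem genuinely applies, and confirming that the stated mild hypotheses are sufficient (for instance, that positivity of $g$ \emph{everywhere}, not merely on a neighborhood, is what secures one-step reachability). The change-of-variables bookkeeping in the first step is the other delicate point, since a wrong direction for the Jacobian would silently break the argument; I would therefore double-check that $|\det \tfrac{\partial f_\theta}{\partial \boldsymbol\xi}|$ enters $\pi_\Xi$ with a positive rather than negative power.
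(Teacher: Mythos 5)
Your proof is correct, and at its core it is the same argument as the paper's: identify Algorithm \ref{algo1} as an exact Metropolis--Hastings sampler for the projected latent target of Section \ref{sub:pl_dist}, use the positivity and absolute-continuity hypotheses to get convergence, and marginalize out the auxiliary variables $\mathbf{y}_O$. The difference is the direction in which you apply the change of variables. You pull the data-space target back to the latent space, obtaining $\pi_\Xi(\boldsymbol\xi)\propto q(\mathbf{y}_O)\,p_{f,\theta}(\mathbf{y}_M;\mathbf{x}_O)\,\bigl|\det\tfrac{\partial f_\theta}{\partial\boldsymbol\xi}\bigr|$, and verify that the stated $\alpha$ is the MH ratio for $\pi_\Xi$ under the proposal $g$; the paper instead pushes the proposal kernel forward into the modeled data space, interpreting $g$ as the data-space kernel $g(f^{-1}_\theta(\mathbf{y'})|f^{-1}_\theta(\mathbf{y}))\bigl|\det\tfrac{\partial f^{-1}_\theta}{\partial\mathbf{y'}}\bigr|$ (this is where the Jacobian factors of $\alpha$ are absorbed), so that the chain is an ordinary MH sampler in $\mathcal{X}$ targeting $q(\mathbf{y}_O)p_{f,\theta}(\mathbf{y}_M|\mathbf{x}_O)$ directly. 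These are dual bookkeeping choices for one and the same cancellation, and your algebra (including the sign of the Jacobian exponent, which you were right to flag) checks out. The other divergence is in the convergence step: the paper delegates it entirely to the cited conditions of \citet{tsvetkov2013convergence}, whereas you sketch the standard general-state-space route (everywhere-positive proposal $\Rightarrow$ irreducibility, rejection probability $\Rightarrow$ aperiodicity, plus Harris recurrence). Your version buys a self-contained and more explicit argument—in particular you state the detailed-balance and marginalization steps that the paper leaves implicit—at the cost of having to be careful that those ergodicity claims are genuinely established rather than asserted; the paper's version is shorter but leans entirely on the reference for the part you correctly identify as the real mathematical content.
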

\begin{proof}
Under these assumptions, the diffeomorphism (i.e, an invertible and differentiable mapping) provided by the flow $f_{\theta}$ allows us to interpret the latent transition kernel $g(\boldsymbol\xi' | \boldsymbol\xi)$ as the transition kernel $g(f^{-1}_{\theta}(\mathbf{y'})|f^{-1}_{\theta}(\mathbf{y}))|\det \frac{\partial f^{-1}_{\theta}}{\partial \mathbf{y'}}|$ within the modeled data space that is positive for all $\mathbf{y}, \mathbf{y'} \in \mathcal{X}$ and is the density for an absolutely continuous distribution. Additionally, we note:

\begin{equation*}
    \frac{q(\mathbf{y'}_{O})p_{f,\theta}(\mathbf{y'}_{M};\mathbf{x}_{O})}{q(\mathbf{y}_{O})p_{f,\theta}(\mathbf{y}_{M};\mathbf{x}_{O})} = \frac{q(\mathbf{y'}_{O})p_{f,\theta}(\mathbf{y'}_{M}|\mathbf{x}_{O})}{q(\mathbf{y}_{O})p_{f,\theta}(\mathbf{y}_{M}|\mathbf{x}_{O})}.
\end{equation*}

The diffeomorphism provided by the flow $f_{\theta}$ also guarantees that $q(\mathbf{y}_{O})p_{f,\theta}(\mathbf{y}_{M}|\mathbf{x}_{O})$ is positive for all $(\mathbf{y}_{M}; \mathbf{y}_{O})\in \mathcal{X}$ and is the density for an absolutely continuous distribution.  The procedure listed in Algorithm \ref{algo1} therefore describes a Metropolis-Hastings update satisfying the conditions described by \citet{tsvetkov2013convergence}. The paired values $f_{\theta}(\xi) = (\mathbf{y}_{M};\mathbf{y}_{O})$ 
obtained through iterated application of Algorithm \ref{algo1} thus converge to a target distribution with (unnormalized) density $q(\mathbf{y}_{O})p_{f,\theta}(\mathbf{y}_{M}|\mathbf{x}_{O})$.
\end{proof}

The requirements for convergence are very mild and are satisfied by the most common choices for latent, transition proposal, and auxiliary distributions (e.g. multivariate normal distributions).  We note that the eventual convergence of the PL-MCMC update towards the desired conditional distribution is not influenced by our choice of the auxiliary distribution $q$.  However, the choice of this auxiliary distribution can affect the rate of convergence. We have found agreeable performance by selecting $q$ to be an independent normal distribution centered on the conditioning values $\mathbf{x}_{O}$.  This guides the Markov Chain towards reasonable samples more quickly by leveraging learned dependencies between the observed and missing components of the modeled data.

\section{Training Normalizing Flows from Missing Data}

With PL-MCMC providing samples from the conditional distributions of normalizing flows, a natural application of the technique is in MC-EM training \citep{dempster1977maximum, wei1990monte, neath2013convergence} of normalizing flows from incomplete data. MC-EM training involves imputing missing values within the training set via conditional sampling of our current model, and then updating the parameters of our model to best fit the newly imputed training set. As described within Appendix \ref{app:C}, this leads to Algorithm \ref{algo2}, with $\texttt{PL-MCMC}(\mathbf{x}_{O, i};  p_{f,\theta}, q_{i})$ denoting the distribution obtained by following an implementation of PL-MCMC with auxiliary density $q_{i}$ (defined in \ref{sub:pl_dist}) and $\texttt{train}$ being any training procedure that returns flow parameters $\theta$ approximately maximizing the likelihood of a complete data training set.  For our experimental tests, $\texttt{PL-MCMC}$ is obtained through iterated application of Algorithm \ref{algo1}.

\begin{algorithm}[]
\SetInd{0.2cm}{0.4cm}
 \caption{Monte Carlo Expectation Maximization Training of Normalizing Flow}
 \label{algo2}
\KwIn{Incomplete training data $X_{train} = \{\mathbf{x}_{O,1}, \mathbf{x}_{O,2}, \ldots, \mathbf{x}_{O,T}\}$.  Auxiliary densities $q_{i}$. Normalizing flow training procedure $\texttt{train}$.  Parameterized flow architecture $f_{\theta}$.}
\While{training}{
  \For{$i \gets 1$ \KwTo $T$}{
    Sample $\mathbf{y}_{M,i} \sim \texttt{PL-MCMC}(\mathbf{x}_{O, i};  p_{f,\theta}, q_{i})$;
    }
$X'_{train} = \{(\mathbf{y}_{M,1}; \mathbf{x}_{O,1}), (\mathbf{y}_{M,2}; \mathbf{x}_{O,2}), \ldots, (\mathbf{y}_{M,T}; \mathbf{x}_{O,T})\}$;\\
$\theta \leftarrow \texttt{train}(f, \ X'_{train})$; \\
}
\end{algorithm}

Intuitively, this procedure relies on conditional inference to ``boost'' the accuracy of our current model for the joint distribution governing the training data. At each step of Algorithm \ref{algo2}, $X'_{train}$ represents samples from an approximation of the modeled data's ground truth distribution. We fit $\theta$ to model this approximate joint distribution. After conditional inference with the new normalizing flow using PL-MCMC, the next iteration of $X'_{train}$ represents samples from a distribution with a smaller divergence from the ground truth distribution, as discussed in Appendix \ref{app:a}.  Importantly, this MC-EM training procedure assumes that data is missing at random \citep{little2019statistical}.

\section{Qualitative Experimental Results}

For a qualitative examination of the performance of PL-MCMC, we focus on conditionally sampling missing data using normalizing flows that have been trained from complete data. We must note that the the purpose of PL-MCMC is to sample from a model's conditional distributions, which may not coincide with accurately replicating the ground truth values of missing data. These qualitative experiments are therefore intended to illustrate aspects of the operation of PL-MCMC and to provide a visual verification of the method's performance.  Further details of these experiments and examples of unconditioned samples from the normalizing flows are provided in Appendix \ref{app:D}.

\subsection{Conditional Inference with CIFAR-10 Images}

We first consider sampling a missing central quarter of CIFAR-10 \citep{krizhevsky2009learning} images ($32 \times 32$ full color images) using a normalizing flow following the GLOW architecture \citep{kingma2018glow}.  To bolster our claim that PL-MCMC does not require specially trained models, we utilize a publicly available pre-trained model \citep{pytorch_glow} for this experiment. Initial and final completions provided by the Markov Chain are illustrated in Figure \ref{fig:cifar_comp}.

\FloatBarrier
\begin{figure}[h!]
  \centering
     \begin{subfigure}[b]{4cm}
         \centering
         \includegraphics[width=3cm]{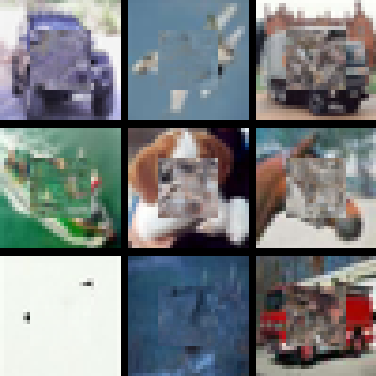}
         \caption{Initial Completions}
     \end{subfigure}
     \begin{subfigure}[b]{4cm}
         \centering
         \includegraphics[width=3cm]{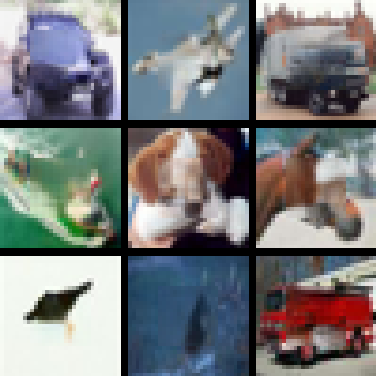}
         \caption{Final Completions}
     \end{subfigure}
     \begin{subfigure}[b]{4cm}
         \centering
         \includegraphics[width=3cm]{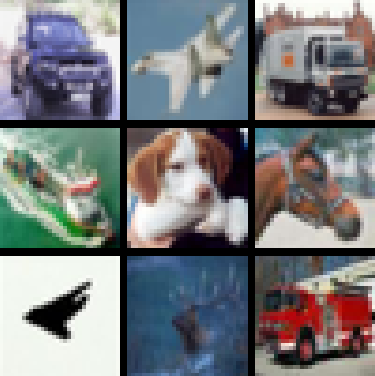}
         \caption{Ground Truth}
     \end{subfigure} \\
  \caption{Conditional inference of CIFAR-10 images with normalizing flow trained on complete data.}
  \label{fig:cifar_comp}
\end{figure}
\FloatBarrier

The initial state of the Markov Chain is constructed by filling pixels with RGB values randomly selected from the observed subset.  Latent space transitions are generated via small perturbations within the absolute coordinates of the latent space. PL-MCMC is carried out for 25,000 proposals. Example progressions of completions are provided in Figure \ref{fig:cifar_prog}. In comparison with unconditioned samples, the PL-MCMC completions appear reasonable, given the capabilities of the underlying model, and highlight the perceptual benefit provided by conditioned sampling.

\FloatBarrier
\begin{figure}[h!]
\centering
     \begin{subfigure}[b]{8cm}
         \centering
         \includegraphics[width=8cm]{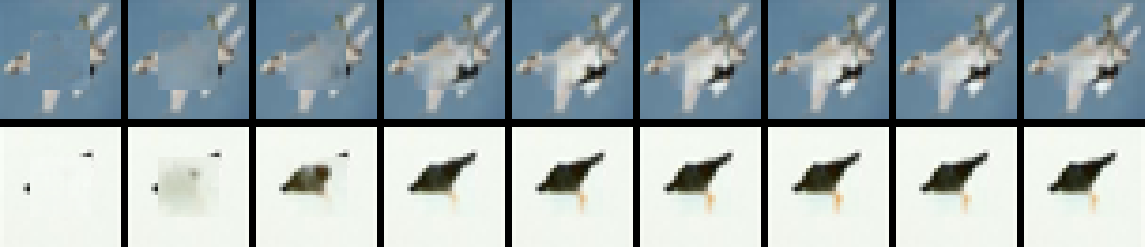}
     \end{subfigure}
  \caption{Progression of CIFAR-10 completions over intervals of 3,000 PL-MCMC proposals.}
  \label{fig:cifar_prog}
\end{figure}
\FloatBarrier

\subsection{Conditional Inference with CelebA Images}

Next we consider sampling a missing right half of CelebA \citep{liu2015deep} images (aligned, cropped, and resized to $64 \times 64$ full color images) using a normalizing flow following the GLOW architecture \citep{kingma2018glow}.  To bolster our claim that PL-MCMC does not require specially trained models, we utilize a publicly available pre-trained model \citep{glow_celeba} for this experiment. Initial and final completions provided by the Markov Chain are illustrated in Figure \ref{fig:celeba_comp}.

\FloatBarrier
\begin{figure}[h!]
  \centering
     \begin{subfigure}[b]{4cm}
         \centering
         \includegraphics[width=3cm]{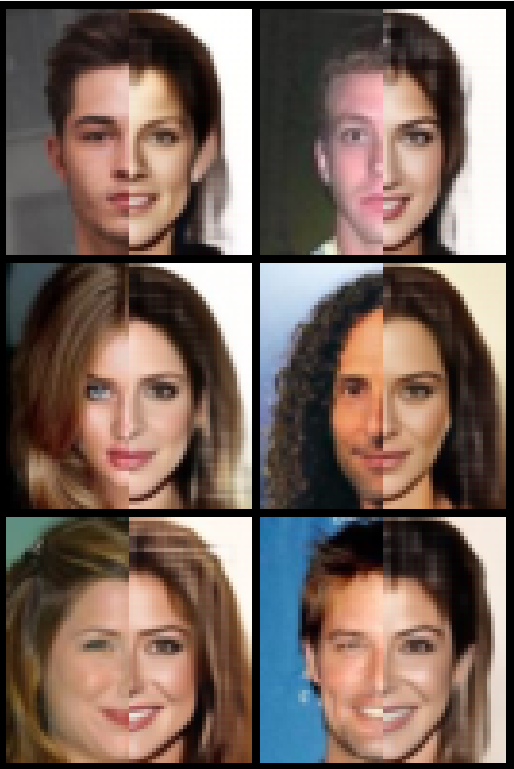}
         \caption{Initial Completions}
     \end{subfigure}
     \begin{subfigure}[b]{4cm}
         \centering
         \includegraphics[width=3cm]{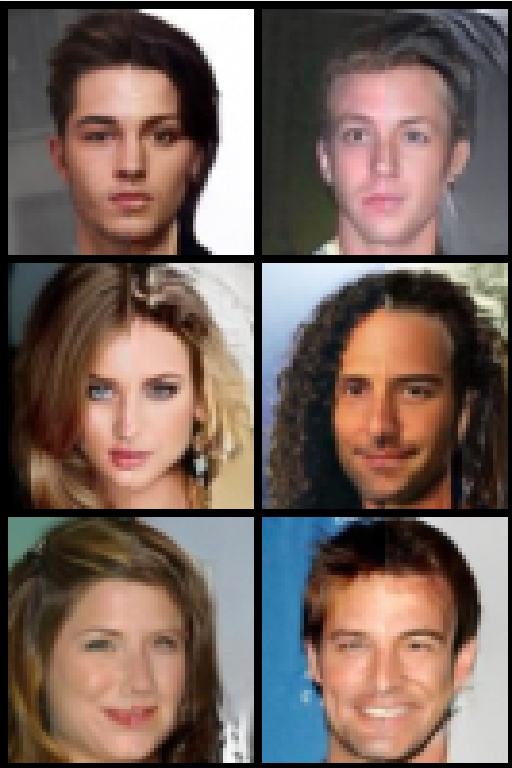}
         \caption{Final Completions}
     \end{subfigure}
     \begin{subfigure}[b]{4cm}
         \centering
         \includegraphics[width=3cm]{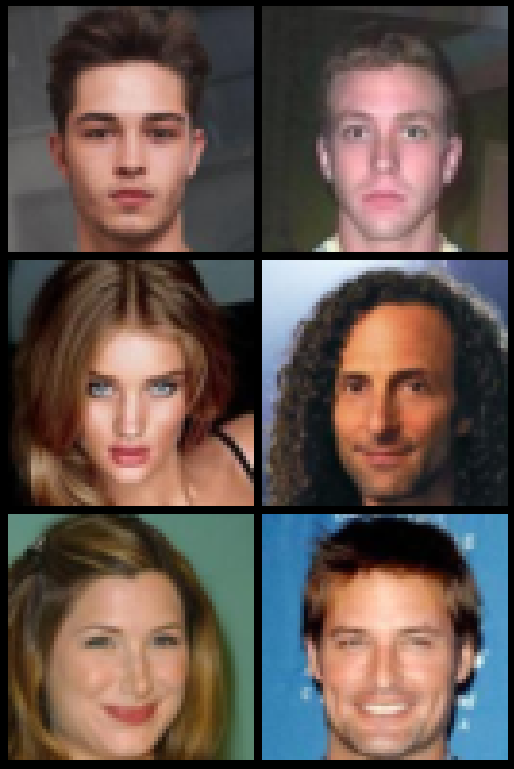}
         \caption{Ground Truth}
     \end{subfigure}
  \caption{Conditional inference of CelebA images with normalizing flow trained on complete data.}
  \label{fig:celeba_comp}
\end{figure}
\FloatBarrier

The initial state of the Markov Chain is constructed by sampling from the normalizing flow at reduced variance.  Latent space transitions are generated via small perturbations within relative coordinates of the latent space.  PL-MCMC is carried out for 25,000 proposals.  Example progressions of completions are provided in Figure \ref{fig:celeba_prog}.  The progression of PL-MCMC completions clearly demonstrates how defining a Markov Chain through the flow's latent space encourages proposing alterations to semantically meaningful attributes.

\FloatBarrier
\begin{figure}[h!]
  \centering
     \begin{subfigure}[b]{8cm}
         \centering
         \includegraphics[width=8cm]{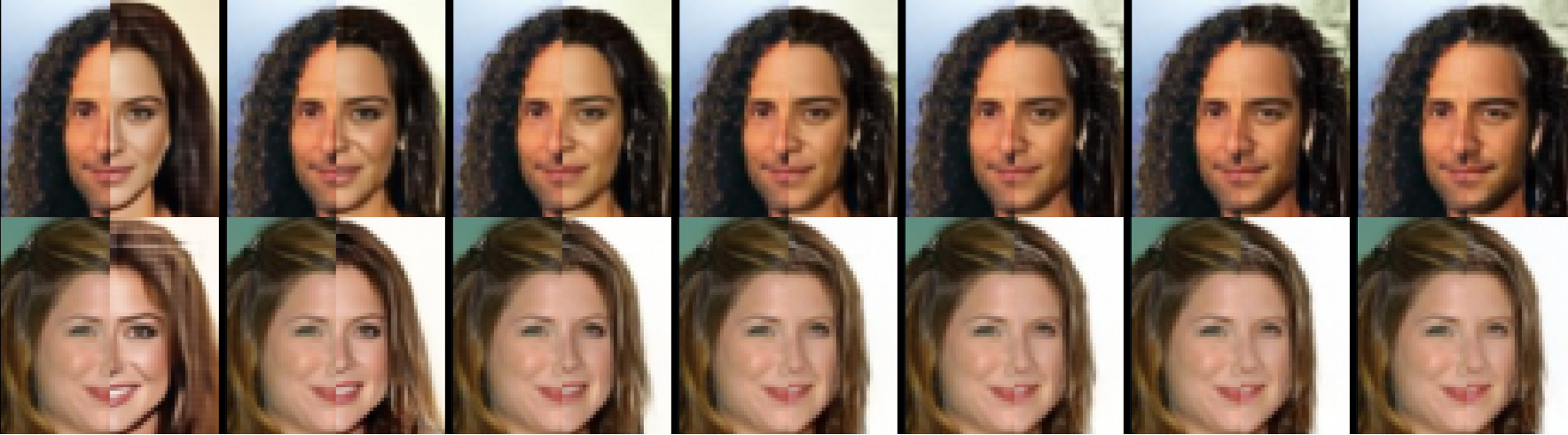}
     \end{subfigure}
  \caption{Progression of CelebA completions over intervals of 1000 PL-MCMC proposals.}
  \label{fig:celeba_prog}
\end{figure}
\FloatBarrier

\subsection{Conditional Inference with MNIST Digits}
\label{sub:mnist_qual}
Finally, we consider sampling missing portions of MNIST \citep{lecun1998gradient} digits ($28 \times 28$ monochrome images) using a normalizing flow following a variant of the NICE architecture \citep{dinh2014nice} under a variety of data missingness mechanisms.  The missingness mechanisms considered are independent missingness (I.M.), patch missingness (P.M.), and square observation (S.O.), at missingness rates of $0.6$, $0.6$, and $0.8$ respectively. Final completions and conditional expectations as obtained by averaging the final completions of $20$ independent PL-MCMC chains are illustrated in Figure \ref{fig:mnist_comp}.

\FloatBarrier
\begin{figure}[h!]
  \centering
     \begin{subfigure}[b]{3cm}
         \centering
         \includegraphics[width=2cm]{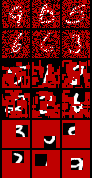}
         \caption{Masked Inputs}
     \end{subfigure}
     \begin{subfigure}[b]{3cm}
         \centering
         \includegraphics[width=2cm]{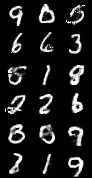}
         \caption{Final Completions}
     \end{subfigure}
     \begin{subfigure}[b]{3cm}
         \centering
         \includegraphics[width=2cm]{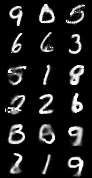}
         \caption{Conditional Means}
     \end{subfigure} 
     \begin{subfigure}[b]{3cm}
         \centering
         \includegraphics[width=2cm]{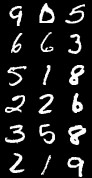}
         \caption{Ground Truth}
     \end{subfigure} \\
  \caption{Conditional inference of MNIST digits with normalizing flow trained on complete data.}
  \label{fig:mnist_comp}
\end{figure}
\FloatBarrier

The initial state of the Markov Chain is constructed by sampling from the normalizing flow at reduced variance.  Latent space transitions are generated by a mixture of small perturbations within the absolute coordinates of the latent space and resampling at reduced variance.  PL-MCMC is performed over 2,000 proposals.  Example progressions of completions are provided in Figure \ref{fig:mnist_prog}.

\FloatBarrier
\begin{figure}[h!]
  \centering
     \begin{subfigure}[b]{8cm}
         \centering
         \includegraphics[width=8cm]{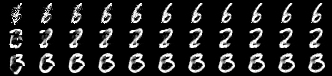}
     \end{subfigure}
  \caption{Progression of MNIST completions over intervals of 200 PL-MCMC proposals.}
  \label{fig:mnist_prog}
\end{figure}
\FloatBarrier

\section{Quantitative Experimental Results}

As an analytical description of the conditional distributions of non-specialized normalizing flows is infeasible, it is difficult to quantify how well PL-MCMC succeeds in sampling from its intended distributions.    Given the extreme dependence of Algorithm \ref{algo2} on accurate conditional sampling from PL-MCMC for effective training, we therefore quantify the performance of normalizing flows trained from incomplete data as an indication for whether PL-MCMC produces sufficiently accurate and efficient samples to remain useful for real-world missing data tasks.  We also test the sampling efficiency of PL-MCMC independently of considerations regarding sampling accuracy.  Further details of these experiments are provided in Appendix \ref{app:E}.

\subsection{Training from Incomplete MNIST Digits}

In this experiment, we consider training models of MNIST digits from training sets affected by a variety of data missingness mechanisms and imputing test sets affected by the same missingness mechanisms.  The data missingness mechanisms used are are independent missingness (I.M.), patch missingness (P.M.), and square observation (S.O.), with missingess rates of $0.3, 0.6, $ and $0.9$.  As imputation performance measures, we consider per-pixel reconstruction RMSE and Fr\'echet Inception Distance \citep{heusel2017gans}.  As comparison, we include results for imputing using pixel wise observed means and using the convolutional variant of MisGAN \citep{li2018misgan}.  Our normalizing flow is a variant of the NICE architecture.  We performed MC-EM training of the normalizing flow for a total of 1,000 epochs.  Inference with normalizing flows is performed using a PL-MCMC chain of $2,000$ proposals.  Our reported results within Table \ref{tab:1} reflect performance across fifteen distinct pairings of training and test sets (models trained, where applicable, from three distinct training sets and each tested on five distinct test sets).  For PL-MCMC, our results reflect imputation performance using individual conditional samples (Ind.) and using the average of $10$ conditional samples (Avg.) for test set completion.  
\FloatBarrier
\setlength{\tabcolsep}{3pt}
\begin{table*}[h!]
\caption{ Comparison of imputation performance for reconstructing MNIST digits.  Value means are reported to at most the first significant digit of standard error.}
\label{tab:1}
\resizebox{\textwidth}{!}{
\centering
\begin{tabular}{rrccclrrrr}
\toprule
\multicolumn{2}{c}{} & \multicolumn{4}{c}{Reconstruction RMSE} & \multicolumn{4}{c}{FID} \\ \cmidrule(l){3-6} \cmidrule(l){7-10} 
\multicolumn{2}{r}{Rate} & Mean & \begin{tabular}[c]{@{}c@{}}PL-MCMC\\ Ind.\end{tabular} & \begin{tabular}[c]{@{}c@{}}PL-MCMC\\ Avg.\end{tabular} & MisGAN & Mean & \begin{tabular}[c]{@{}c@{}}PL-MCMC\\ Ind.\end{tabular} & \begin{tabular}[c]{@{}c@{}}PL-MCMC\\ Avg.\end{tabular} & MisGAN \\ \addlinespace
\multirow{3}{*}{\rotatebox[origin=c]{90}{I.M.}} & 0.3 & 0.2570(1) & 0.153(1) \ \  & 0.130(2) \ \  & 0.1277(4) &   23.5(1) & 1.56(7) \ \ & 1.58(8) \ \ & 0.17(1) \ \ \\
 & 0.6 & 0.2573(1) & 0.1585(6) & 0.1456(1) & 0.167(2) &  72.2(1) & 5.7(5) \ \ \ \ & 6.1(5) \ \ \ \ & 0.78(2) \ \ \\
 & 0.9 & 0.2574(0) & 0.261(2) \ \  & 0.256(1) \ \  & 0.326(4) & 114.7(1) & 87(2) \ \ \ \ \ \ \ & 90(2) \ \ \ \ \ \ \ & 11(1) \ \ \ \ \ \  \ \\ \addlinespace
\multirow{3}{*}{\rotatebox[origin=c]{90}{S.O.}} & 0.3 & 0.0577(3) & 0.0410(3) & 0.0371(3) & 0.0439(6) & 0.1(0) & 0.075(1) & 0.076(1) & 0.006(1) \\
 & 0.6 & 0.1688(2) & 0.152(3) \ \ & 0.137(2) \ \ & 0.159(2) & 5.8(1) & 1.4(2) \ \ \ \ & 1.7(2) \ \  \ \ & 0.6(1) \ \ \ \  \\
 & 0.9 & 0.2467(1) & 0.2595(8) & 0.2535(7) & 0.322(1) & 68.7(2) & 50(2) \ \ \ \ \ \ \ & 54(1) \ \ \ \ \ \ \ & 4(1) \ \ \ \ \ \ \ \\ \addlinespace
\multirow{3}{*}{\rotatebox[origin=c]{90}{P.M.}} & 0.3 & 0.2629(3) & 0.1795(8) & 0.1565(5) & 0.1956(8) & 17.0(1) & 1.6(1) \ \ \ \ & 1.8(1) \ \ \ \ & 0.8(1) \ \ \ \ \\
 & 0.6 & 0.2641(1) & 0.221(4) \ \  & 0.205(3) \ \  & 0.247(1) & 57.6(1) & 15(1) \ \ \ \ \  \ \ & 16(1) \ \ \ \ \ \ \ & 2.9(2) \ \ \ \ \\
 & 0.9 & 0.2622(0) & 0.2675(8) & 0.2648(9) & 0.3693(9) & 110.5(1) & 89(2) \ \ \ \ \ \ \ & 92(2) \ \ \ \ \ \ \ & 16(2) \ \ \ \ \ \  \ \\ \bottomrule
\end{tabular}
}
\end{table*}
\FloatBarrier

As RMSE and FID score are measures of distortion and divergence, respectively, a single imputation estimate cannot simultaneously optimize both \citep{blau2018perception}.  MisGAN primarily focuses on minimizing imputation FID, while our MC-EM training favors reducing reconstruction RMSE.  Our results highlight a potential advantage of performing imputation via sampling from conditional distributions.  With its deterministic imputation procedure, MisGAN is dedicated to minimizing FID and cannot reduce reconstruction RMSE by averaging multiple reconstructions.  With PL-MCMC sampling, we can choose, to some degree, whether to minimize FID by imputing with a single sample from the flow's conditional distribution or to minimize RMSE by averaging across multiple samples.  These results demonstrate that PL-MCMC is able to sample from the conditional distributions of normalizing flows sufficiently well to acceptably train normalizing flows from MNIST digits affected by a variety of data missingness mechanisms and rates. 

\subsection{Training from Incomplete UCI Datasets}

In this experiment, we consider training models of various continuous UCI datasets \citep{bache2013uci} affected by 50\% uniformly missing values.  As a performance measure, we consider normalized MSE of imputing missing values within the training set.  As comparison, we include results for imputing using variable-wise observed means, using the missForest \citep{stekhoven2012missforest} R package with default settings, and using VAEs via MIWAE \citep{mattei2019miwae}.  Our normalizing flow is a variant of the NICE architecture.  We performed MC-EM training of the normalizing flow for a total of 1,000 epochs.  For inference, the PL-MCMC chain is run for 1,000 proposals.  Our reported results within Table \ref{tab:2} reflect performance across five distinct training sets.  For PL-MCMC, our results reflect imputation performance using individual conditional samples (Ind.) and using the average of $25$ conditional samples (Avg.) for test set completion.

\FloatBarrier
\begin{table*}[h!]
\caption{Comparison of imputation NMSE results for continuous UCI datasets affected by 50\% uniform missingness. Value means are reported to at most the first significant digit of standard error.}
\label{tab:2}
\centering
\begin{tabular}{rcccccc}
\toprule
 & $\texttt{banknote}$ & $\texttt{breast}$ & $\texttt{concrete}$ & $\texttt{red-wine}$ & $\texttt{white-wine}$ & $\texttt{yeast}$ \\ \addlinespace
PL-MCMC Ind. & 1.12(5) & 0.46(2) & 1.22(4) & 1.22(3) & 1.45(3) & 1.67(5) \\
PL-MCMC Avg. & 0.58(3) & 0.31(2) & 0.67(3) & 0.69(3) & 0.76(1) & 0.96(6) \\ \addlinespace
MIWAE & 0.56(4) & 0.29(1) & 0.63(3) & 0.66(2) & 0.73(3) & 0.95(5) \\
missForest & 0.74(3) & 0.31(1) & 0.67(2) & 0.74(3) & 0.81(1) & 1.18(3) \\
Mean & 0.99(1) & 1.00(3) & 1.00(1) & 1.00(2) & 1.01(1) & 0.96(6) \\ \bottomrule
\end{tabular}
\end{table*}
\FloatBarrier

In all cases, the MC-EM trained normalizing flows perform at least as well as missForest and closely match MIWAE for estimating conditional expectations.  We can conclude that, while there is some potential room for improvement in capturing the exact ground truth conditional distributions, MC-EM training of normalizing flows with PL-MCMC produces imputations  comparable to those from current methods for this particular task.

\subsection{Sampling Efficiency for Inference of MNIST Digit}

Here we consider the task of estimating the conditional expectation for the missing region of a single MNIST digit using the average of 100 independent Markov Chains.  We also use this experiment as an opportunity to explore the effect on conditional sampling performance produced by different choices for PL-MCMC's auxiliary distribution and the transition proposal distribution.  The RMSE versus proposal number of conditional means estimated via Gibbs sampling within the modeled data space and PL-MCMC with varying auxiliary distributions are compared in Figure \ref{fig:sampling_proposals}.  Statistics are gathered from 10 distinct replications of the experiment.
\FloatBarrier
\begin{figure}[h!]

  \centering
  \includegraphics[width=\textwidth]{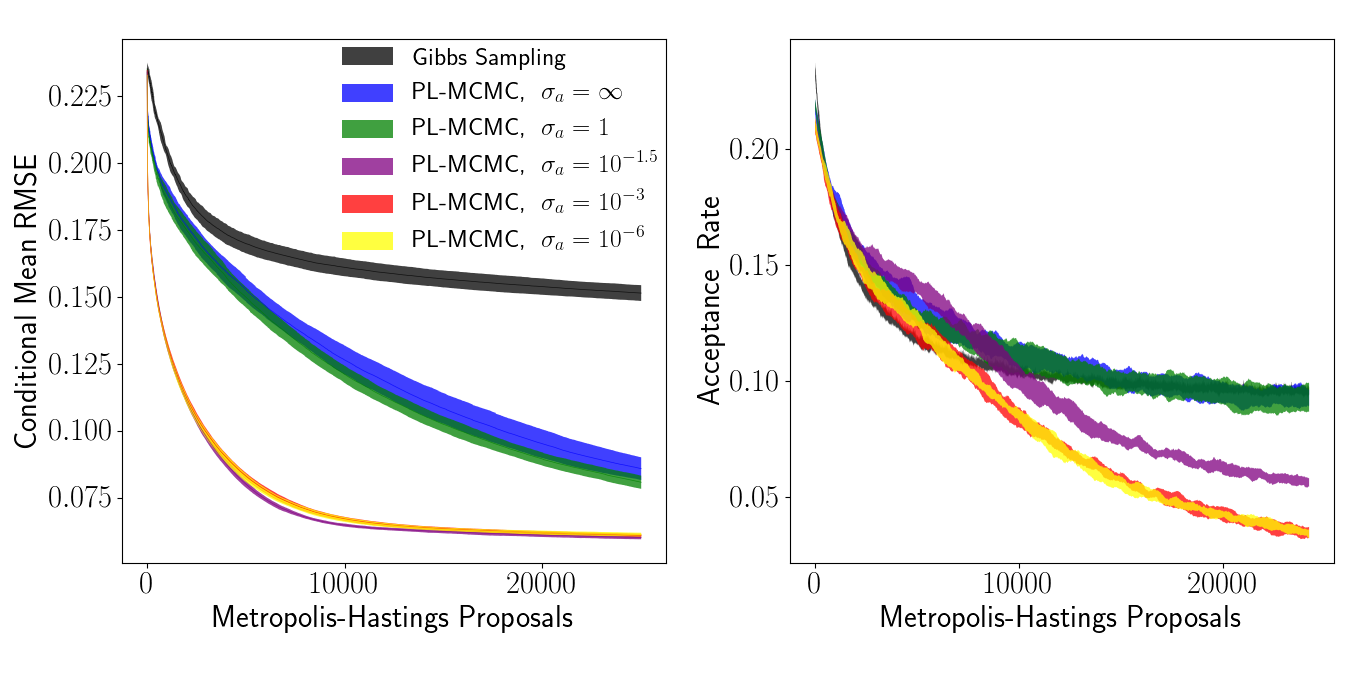}
  \caption{Single standard deviation envelopes of estimated conditional mean per-pixel RMSE and proposal acceptance rate for conditional sampling of MNIST digit.  PL-MCMC implementations only differ by choice of auxiliary density.}
 \label{fig:sampling_proposals}
\end{figure}
\FloatBarrier

These results demonstrate that PL-MCMC can offer significant performance gains over comparable MCMC methods confined to the modeled data space.  Even when using an improper uniform distribution as the auxiliary density (effectively omitting $q$ from the acceptance probability calculation in Algorithm \ref{algo1}), PL-MCMC can accelerate conditional sampling by leveraging the flow's latent space to propose more effective proposal transitions.  Depending on the characteristics of the normalizing flow's conditional distribution, selecting a more restrictive auxiliary distribution can greatly accelerate sampling even further.  As the results with auxiliary distributions with standard deviations of $\sigma_{a} = 10^{-3}$ and $\sigma_{a} = 10^{-6}$ closely overlap, there may be some concern that the auxiliary distribution might dominate PL-MCMC's behavior and reduce the procedure to a simple search in the latent space to best rebuild the observed data, starting around $\sigma_{a} = 10^{-3}$.  While this concern may be warranted when using exceedingly strong choices for the auxiliary distribution, analysis demonstrates (Appendix \ref{sub:change_study}) that this is not the case for our results with $\sigma_{a} = 10^{-3}$.  The RMSE versus proposal number of conditional means estimated via Gibbs sampling within the modeled data space and PL-MCMC with varying transition proposal distributions are compared in Figure \ref{fig:sampling_perturbations}.

\FloatBarrier
\begin{figure}[h!]

  \centering
  \includegraphics[width=\textwidth]{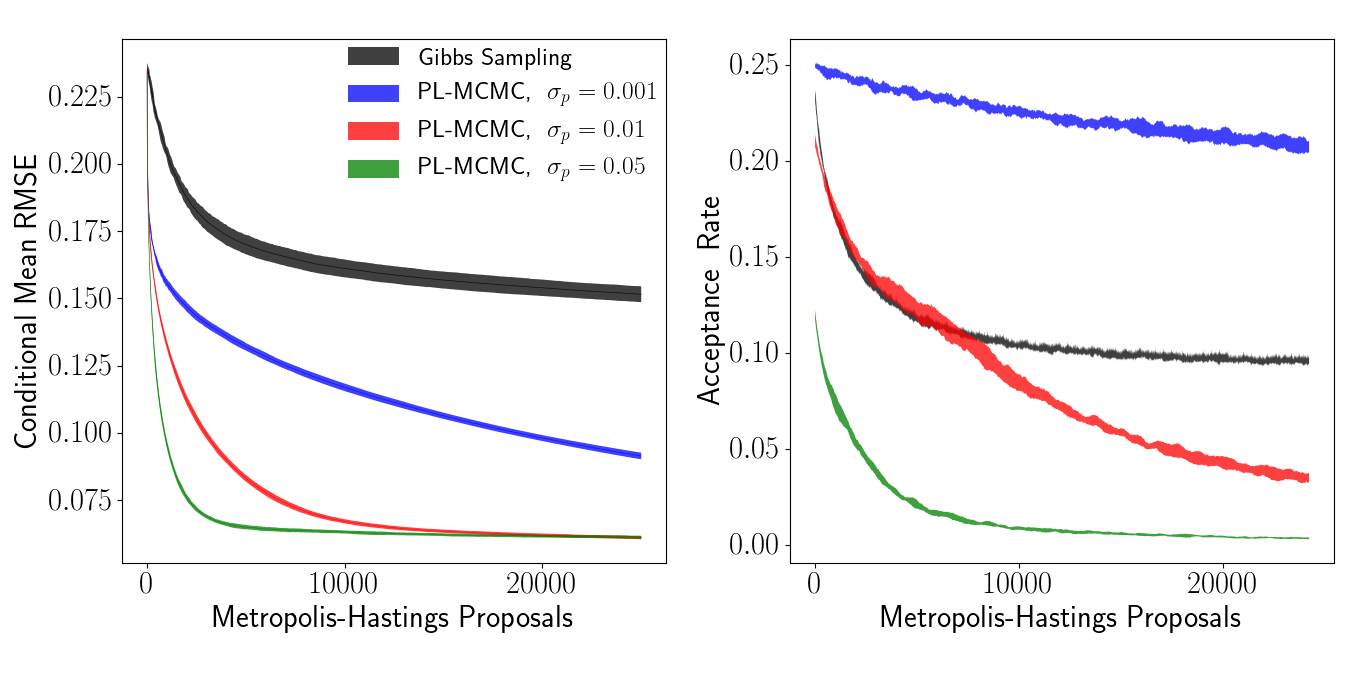}
  \caption{Single standard deviation envelopes of estimated conditional mean per-pixel RMSE and proposal acceptance rate for conditional sampling of MNIST digit.  PL-MCMC implementations only differ by scale of perturbations used in their transition proposals.}
 \label{fig:sampling_perturbations}
\end{figure}
\FloatBarrier

 From these experiments, we offer a few preliminary conclusions regarding the effect of the auxiliary and transition proposal distributions on conditional sampling performance with PL-MCMC.  In the Metropolis-Hastings implementation of PL-MCMC, the transition proposal distribution behaves much as one would expect for the transition proposal distributions of any Metropolis-Hastings procedures.  Increasing the proposal distribution's scale will accelerate initial convergence, but may encounter problems traversing concentrated regions of the target distributions.  To some target distribution dependent point (around $\sigma_{a} = 10^{-1.5}$ in Figure \ref{fig:sampling_proposals}), strengthening the auxiliary distribution will continue to accelerate initial sampling.  Beyond this point, further strengthening of the auxiliary distribution can be detrimental to sampling performance, as the auxiliary distribution becomes mismatched to the intrinsic coupling between missing and observed values modeled by the flow's conditional distribution.  Additional comparisons, including comparisons with respect to approximate computational cost, are provided within Appendix \ref{app:E}. 

\section{Conclusion and Future Work}

The mathematical structure of normalizing flows is exceptionally convenient for approaching conditional sampling via MCMC.  By leveraging this mathematical structure, our proposed PL-MCMC technique enables asymptotically exact conditional inference with normalizing flows, without requiring specialized architecture, training history, or external inference machinery.  The particular implementations used in our experiments are primarily intended to serve as proof-of-concept illustrations of the PL-MCMC technique.  Further research would be necessary to determine optimal choices of auxiliary distributions,transition proposal distributions, and MC-EM training procedures.  Sampling performance may be improved by replacing Metropolis-Hastings proposals with a more sophisticated technique, such as Hamiltonian Monte Carlo.  Our experimental results demonstrate that, even when implemented with a naive Metropolis-Hastings procedure, PL-MCMC enables effective sampling from its intended distributions under practical settings. We believe that, with the PL-MCMC technique, normalizing flows hold great promise for approaching missing data tasks.

\section*{Acknowledgements}
This work was supported by the Office of Naval Research Grant No. N00014-18-1-2244.
\bibliography{plmcmc}
\bibliographystyle{iclr2021_conference}

\appendix
\section{Knowing the Joint Distribution Implies Knowing the Conditional Distributions}
\label{app:a}
\normalsize
In this section, we establish a fundamental guarantee regarding the accuracy of the conditional distributions known by generative models.  Consider two random variables, $x$ and $y$, with ground truth joint distribution $p_{x,y}$ that we have approximated by a generative model with joint distribution $q_{x,y}$.  In the discrete case, we may expand through the definition of the Kullback-Liebler divergence between our generative model and the ground truth to find that:

\begin{equation*}
\begin{split}
D_{KL}(p_{x,y}||q_{x,y}) &= -\sum_{x,y}p(x,y)\log \  q(x,y) + \sum_{x,y}p(x,y) \log \  p(x,y)\\
&= \mathbb{E}_{x}[-\sum_{y}p(y|x)\log \  q(y|x)q(x) + \sum_{y}p(y|x)\log \  p(y|x)p(x)]\\
&= \mathbb{E}_{x}[D_{KL}(p_{y|x}||q_{y|x}) - \log \  q(x) + \log \  p(x)]\\
&= \mathbb{E}_{x}[D_{KL}(p_{y|x}||q_{y|x})] + D_{KL}(p_{x}||q_{x}).\\
\end{split}
\end{equation*}

From the non-negativity of the Kullback-Liebler divergence, we are then guaranteed:

\begin{equation*}
D_{KL}(p_{x,y}||q_{x,y}) \geq \mathbb{E}_{x}[D_{KL}(p_{y|x}||q_{y|x})].
\end{equation*}

With equality only when our generative model perfectly models the marginal distribution of $x$.  When considering the task of inferring $y$ from observed values of $x$, the expected performance of our generative model in approximating the conditional distribution of $y$ given $x$ is no worse than its performance in approximating the full joint distribution between $x$ and $y$.  Therefore, if we know that a generative model is a good approximation of the joint distribution governing some set of random variables, then it must also know good approximations of the conditional distributions among those random variables.

This inequality also serves as a justification for Monte Carlo Expectation Maximization training.  When using our modeled distribution $q_{x,y}$ to impute an incomplete training set, the newly imputed training set is sampled from the distribution $q_{y|x}p_{x}$.  We can easily see that:

\begin{equation*}
D_{KL}(p_{x,y}||q_{y|x}p_{x}) = \mathbb{E}_{x}[D_{KL}(p_{y|x}||q_{y|x})] \leq D_{KL}(p_{x,y}||q_{x,y}) .
\end{equation*}

In the asymptotic limit of dataset size, conditionally inferring missing values within the dataset results in samples from a distribution whose divergence from ground truth is no worse than that of the original model.  Assuming that the original model describes the distribution of a previously imputed version of the training set, this implies that our newly training set is at least as reflective of the ground truth distribution as the previous training set.  In practice, we find that conditional imputation tends to improve divergence of the training set, which in turn allows MC-EM training to improve our model of the joint distribution. 

\section{The Advantage of Latent Space Proposals}
\label{app:B}
Here, we relay our intuition regarding the advantages of defining a Markov Chain within the latent space of a normalizing flow.  This section provides a heuristic argument and therefore utilizes informal terminology to convey our current understanding.  Take a normalizing flow between latent space $\Xi$ and modeled data space $\mathcal{X}$, defining the mappings $f_{\theta}: \Xi \mapsto \mathcal{X}$ and $f^{-1}_{\theta}: \mathcal{X} \mapsto \Xi$.  This normalizing flow imposes the probability density $p_{f,\theta}(\mathbf{x})$ onto all modeled data values $x \in \mathcal{X}$.  As practical applications of normalizing flows primarily involve data embedded within a euclidean space, we will confine this discussion to scenarios where latent and modeled data values are both points in $\mathbb{R}^{n}$ for some $n$.  In these cases, it is straightforward to discuss neighborhoods of fixed radius around points within both the latent and modeled data spaces.

For now, let us consider the task of forming a Markov Chain for unconditionally sampling from the density  $p_{f,\theta}(\mathbf{x})$.  For simplicity, let us only consider proposal perturbations within some fixed radius of the Markov Chain's current state.  With the one-to-one mapping provided by the flow, we have the option of tracking and perturbing the current state within either the latent space or the modeled data space.  When considering the neighborhood of data points, probability mass within the modeled data space is often non-isotropic for highly structured data. However, probability mass is nearly isotropic within the latent space in the neighborhood of the latent representations of data points, assuming that the distribution on latent space states has been appropriately chosen (as is the case for the commonly used multivariate normal or logistic distributions).   As a result, performing an isotropic perturbation within the latent space results in proposals that are about as likely as the starting state.  Within the modeled data space, even a very small isotropic perturbation can produce proposals that are far more unlikely than the starting state.  As an example, suppose our normalizing flow was well trained to model a set of high-fidelity images.  If our proposals within the modeled data space were created by adding independent Gaussian perturbations to pixel values, we would almost always inject noise into the image and proposals within the modeled data space would be tend to be unlikely, low-fidelity images.  With the assumption that transitions between equally likely states are usually accepted and transitions to much more unlikely states are usually rejected, we should expect latent space proposals to be accepted more frequently compared to modeled data space proposals.

As an intuition, we could say that perturbations within the flow's latent space are semantically meaningful for the modeled data set.  As demonstrated by \citet{hoffman2019neutra}, the normalizing flow inherently transforms the modeled probability distribution in a manner that is well suited to exploration using naive, isotropic proposals.  This is related to Adaptive Monte Carlo methods \citep{haario2001adaptive, foreman2013emcee, zhu2019sample}, which attempt to transform the proposal density to most effectively explore a fixed distribution.  With latent space transitions in normalizing flows, it is as though the modeled data distribution has been transformed so as to be best explored by a fixed proposal density.

With PL-MCMC we are concerned with making effective proposals with respect to a conditional distribution.  Even when attempting to sample a conditional distribution, utilizing latent space proposals remains beneficial. Define a projection operator via $proj_{\mathbf{x}_{O}}(\mathbf{y}_{M};\mathbf{y}_{O}) = \mathbf{y}_{M};\mathbf{x}_{O}$, which simply replaces the observed component of a $\mathbf{y} \in \mathcal{X}$ with the conditioning values $\mathbf{x}_{O}$.  The elements of a proposed transition within a Metropolis-Hastings implementation of PL-MCMC are illustrated in Figure \ref{fig:plmcmc_prop}. 

\FloatBarrier
\begin{figure}[h!]
  \centering
  \includegraphics[width=8cm]{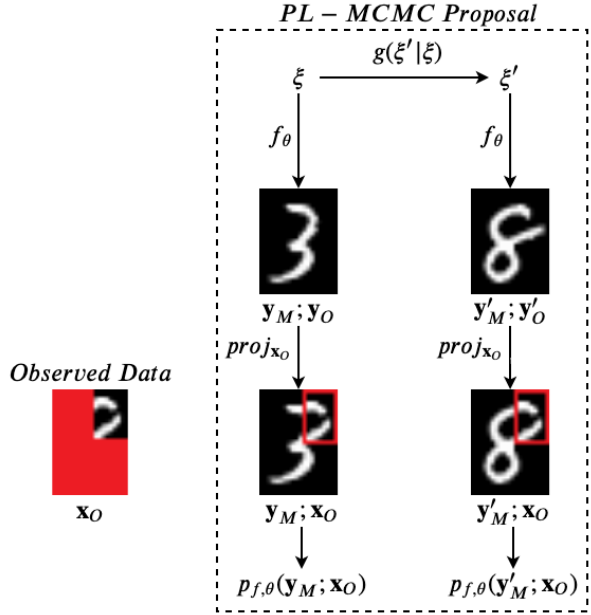}
  \caption{A Metropolis-Hastings PL-MCMC proposal when inferring from observed data $\mathbf{x}_{O}$.}
  \label{fig:plmcmc_prop}
\end{figure}
\FloatBarrier

Averaging over possible pieces of observed data $\mathbf{x}_{O}$, we expect to find that the set of likely completions, $\mathbf{x}_{M}$, under the conditional density $p_{f,\theta}(\mathbf{x}_{M}|\mathbf{x}_{O})$ remains essentially a subset of the set of likely values under the marginalized density $p_{f,\theta}(\mathbf{x}_{M})$.  If a proposed $\mathbf{x}_{M}$ is unlikely under $p_{f,\theta}(\mathbf{x}_{M})$, we expect it to be unlikely under $p_{f,\theta}(\mathbf{x}_{M}|\mathbf{x}_{O})$.  Hence, with PL-MCMC, the advantages of latent space proposals carry through to the conditional inference setting, as the resulting proposed completions can remain likely under $p_{f,\theta}(\mathbf{x}_{M})$.

Returning to the example of modeling a set of high-fidelity images, suppose we observe the left half of these images.  In general, we would believe that the set of likely right half completions conditioned on the observed left half is well covered by set of likely right halves that we see across the entire distribution of images.  Perturbing the pixel values of the right half injects noise, tending to produce a low-fidelity right half, which results in an unlikely, noisy image when combined with the observed left half.  Following the PL-MCMC latent perturbations, proposed right halves may be more able to at least remain the high-fidelity right halves of high-fidelity images.

By employing latent space proposals to sample from $p_{f,\theta}(\mathbf{x}_{M}|\mathbf{x}_{O})$, PL-MCMC can more easily propose completions $\mathbf{x}_{M}$ that could plausibly have been taken from likely members of the modeled data distribution.  Of course, for conditional inference, we also need to produce samples that are well matched to the observed data.  While latent space proposals assist in making meaningful and efficient transitions within a Markov Chain, PL-MCMC ultimately relies on the auxiliary distribution, $q$, and guaranteed convergence to the correct conditional distribution to effectively sample from typical completions of the observed data.

\section{Details of MC-EM Training}
\label{app:C}
In this section we review the derivation of Monte Carlo Expectation maximization \citep{dempster1977maximum, wei1990monte, neath2013convergence} in the context of its use with PL-MCMC.  Suppose we are presented with a training set of $T$ observed values (not all missing the same entries), $X_{train} = \{\mathbf{x}_{O,1}, \mathbf{x}_{O,2}, \ldots, \mathbf{x}_{O,T}\}$.  Ideally, under the assumption that data values are missing at random \citep{little2019statistical}, we'd wish to find the flow parameters $\theta$ that maximize the log-likelihood of $X_{train}$:
\begin{equation*}
    \log \ p_{f, \theta}(X_{train}) = \sum_{i=1}^{T} \log \Big{(} \ \int_{\mathbf{x}_{M, i}} p_{f, \theta}(\mathbf{x}_{M, i} ;\mathbf{x}_{O, i}) d\mathbf{x}_{M, i} \Big{)}.
\end{equation*}

Yet the complexity of the normalizing flow makes an analytical computation of the marginal likelihoods of observed data entirely impractical.  We therefore utilize the Expecation-Maximization (EM) algorithm \citep{dempster1977maximum} to approach this optimization.  Following \citet{dempster1977maximum}, we define $Q(\theta' | \theta)$ to be:

\begin{equation*}
    Q(\theta' | \theta) = \sum_{i=1}^{T} \mathbb{E}_{p_{f, \theta}}[\log(\ p_{f, \theta'}(\mathbf{x}_{M, i} ;\mathbf{x}_{O, i}) \ ) | \ \mathbf{x}_{O, i}].
\end{equation*}

PL-MCMC can be immediately applied to approximate these expectations.  Let the set $Y = \{ \mathbf{y}_{M,1}, \mathbf{y}_{M,2}, \ldots, \mathbf{y}_{M,T}\}$ be created by sampling each $\mathbf{y}_{M,i} \sim p_{f, \theta}(\mathbf{y}_{M,i} | \mathbf{x}_{O,i})$ using a PL-MCMC chain as described previously.  We may now use the approximation:

\begin{equation*}
    Q(\theta' | \theta) \approx \sum_{i=1}^{T} \log(\ p_{f, \theta'}(\mathbf{y}_{M, i} ;\mathbf{x}_{O, i}) \ ).
\end{equation*}

In principle, we would then update $\theta$ following;

\begin{equation*}
    \theta \leftarrow \underset{\theta'}{\operatorname{argmax}} \ Q(\theta' | \theta).
\end{equation*}

In practice, it is more feasible to continue to train the flow on the conditionally imputed version of $X_{train}$.  With $X'_{train} = \{(\mathbf{y}_{M,1}; \mathbf{x}_{O,1}), (\mathbf{y}_{M,2}; \mathbf{x}_{O,2}), \ldots, (\mathbf{y}_{M,T}; \mathbf{x}_{O,T})\}$ denoting our newly imputed training set and $\texttt{train}$ being any training procedure that returns flow parameters $\theta$ approximately maximizing the likelihood of a complete data training set, we rely on the approximation that:

\begin{equation*}
    \underset{\theta'}{\operatorname{argmax}} \ Q(\theta' | \theta) \approx \texttt{train}(f,  \ X'_{train}).
\end{equation*}

This approximation immediately leads to our described algorithm for the MC-EM training of normalizing flows using PL-MCMC.

\section{Details Regarding Qualitative Experiments}
\label{app:D}
\subsection{Details Regarding Conditional Inference with CIFAR-10 Images}

In this experiment, we infer a missing central quarter (an $8 \times 8$ pixel square) of CIFAR-10 \citep{krizhevsky2009learning} images. The CIFAR-10 dataset is composed of $60,000$ full color $32 \times 32$ images of $10$ distinct classes of objects, with $6,000$ images provided for each class.  The standard training and test set split for the CIFAR-10 dataset is $50,000$ and $10,000$ images, respectively.  

Our chosen normalizing flow is a variant of the GLOW \citep{kingma2018glow}  architecture.  We utilized a publicly available, pre-trained model \citep{pytorch_glow} for this experiment.  In the terminology of \citet{kingma2018glow}, the model has a depth of flow (K) of 32 and a total of 3 levels (L) and flow layers utilize $512$ hidden channels.  The model was reportedly trained for a total of $1,500$ epochs using Adamax with a learning rate of $5 \times 10 ^{-4}$ and a batchsize of $64$.  We presume, but cannot guarantee, that the model was trained on the standard $50,000$ example CIFAR-10 training set.  Examples of unconditioned samples from this model are provided within Figure \ref{fig:cifar_uncon1}, as obtained with the standard sampling variance, $\sigma = 1.0$ (temperature $T = 1.0$, in the terminology of \citet{kingma2018glow}).  From these unconditioned samples, it is clear that the model has not collapsed to memorizing the training set.

\FloatBarrier
\begin{figure}[h!]
  \centering
  \includegraphics[width=12cm]{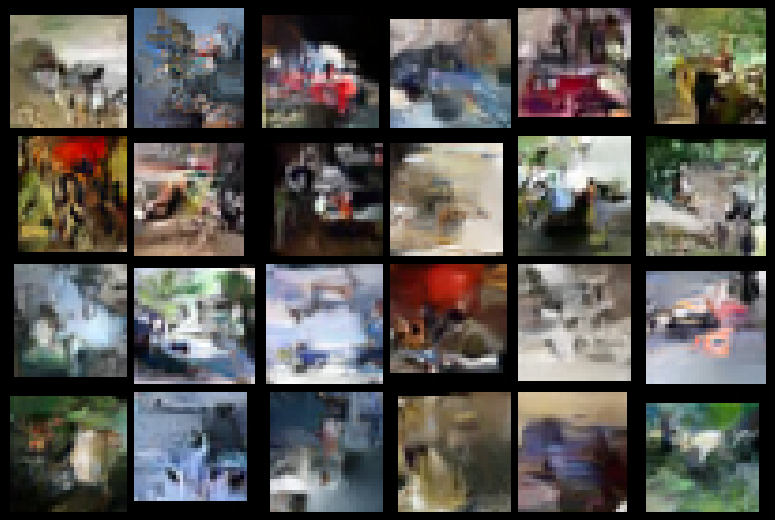}
  \caption{Unconditioned samples at standard variance ($\sigma = 1.0$) from CIFAR-10 model.}
  \label{fig:cifar_uncon1}
\end{figure}
\FloatBarrier

The particular implementation of the normalizing flow most easily provided access to a coordinate dependent representation of the latent space, which we call absolute coordinates for the latent space.  Fundamentally, the Markov Chain within PL-MCMC may utilize any convenient representation of the latent space, so long as a diffeomorphism still maps that representation back to the modeled data.  For this CIFAR-10 experiment, we chose to employ a Markov Chain within the architecture's absolute coordinates.  As a general note, the qualifier ``absolute'' merely refers to the representation favored by the flow's implementation, while the qualifier ``relative'' refers to the representation best coinciding with the chosen prior distribution for the flow.  The terms only reflect aspects of our practical usage of the representations, as there is no theoretically favored diffeomorphic representation of the latent space.

For inference, we selected images from the standard CIFAR-10 test set.   During inference with PL-MCMC, latent space transitions are generated within the above mentioned absolute coordinates of the flow's latent space by small perturbations from the current latent state.  When perturbing the latent state, proposals are generated following a perturbation kernel, $g_{p}(\boldsymbol\xi'|\boldsymbol\xi)$, such that $\boldsymbol\xi' \sim \mathcal{N}(\boldsymbol\xi, \sigma_{p}^{2}I)$.  The auxiliary distribution, $q$, is chosen to target $\mathbf{y}_{O} \sim \mathcal{N}(\mathbf{x}_{O}, \sigma_{a}^{2} I)$.  For this experiment, we select $\sigma_{p} = 0.01$ and $\sigma_{a} = 1 \times 10^{-3}$.  PL-MCMC is carried out over 25,000 proposals.

\subsection{Details Regarding Conditional Inference with CelebA Images}

In this experiment, we infer a missing right half (a $64 \times 32$ pixel rectangle) of CelebA \citep{liu2015deep} images. The CelebA dataset is composed of $202,599$ full color images of celebrity faces.  We utilize the aligned and cropped version of the CelebA dataset, resized to a size of $64 \times 64$.   

Our chosen normalizing flow is a variant of the GLOW \citep{kingma2018glow} architecture.  We utilized a publicly available, pre-trained model \citep{glow_celeba} for this experiment.  In the terminology of \citet{kingma2018glow}, the model has a depth of flow (K) of 32 and a total of 3 levels (L) and flow layers utilize $512$ hidden channels.  The model was reportedly trained for a total of $1,500$ epochs using Adamax with a learning rate of $1 \times 10 ^{-3}$ and a batchsize of $12$.  We believe that this model was trained on the entirety of the CelebA dataset, with no withheld test or validation set.  Examples of unconditioned samples from this model are provided within Figures \ref{fig:celeba_uncon1} and \ref{fig:celeba_uncon2}, as obtained with reduced and standard sampling variance, $\sigma = 0.5$ and $\sigma = 1.0$ respectively (temperatures $T = 0.5$ and $T = 1.0$, in the terminology of \citet{kingma2018glow}).  From these unconditioned samples, it is clear that the model has not collapsed to memorizing the training set.

\FloatBarrier
\begin{figure}[h!]
  \centering
  \includegraphics[width=12cm]{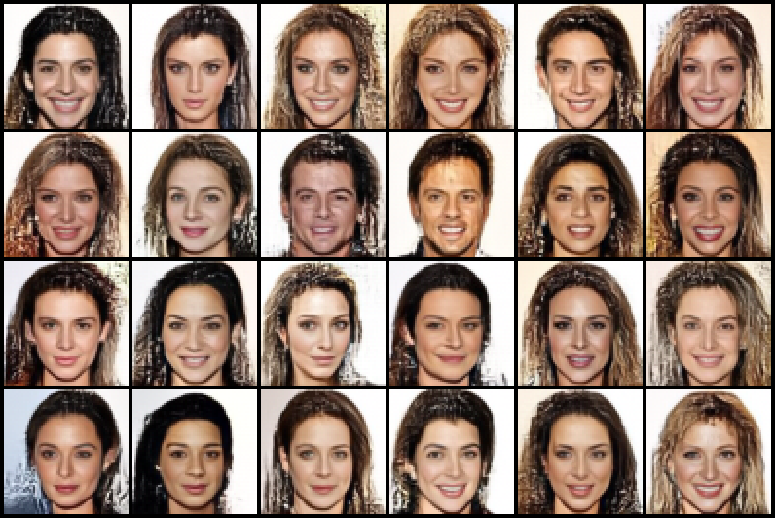}
  \caption{Unconditioned samples at reduced variance ($\sigma = 0.5$) from CelebA model.}
  \label{fig:celeba_uncon1}
\end{figure}
\FloatBarrier

\FloatBarrier
\begin{figure}[h!]
  \centering
  \includegraphics[width=12cm]{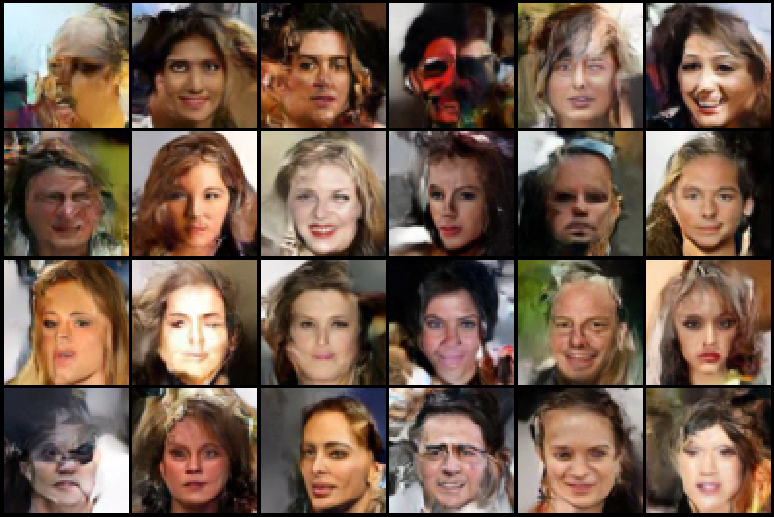}
  \caption{Unconditioned samples at standard variance ($\sigma = 1.0$) from CelebA model.}
  \label{fig:celeba_uncon2}
\end{figure}
\FloatBarrier

As in the CIFAR-10 experiment, the particular implementation of the normalizing flow most easily provided access to a coordinate dependent representation of the latent space, which we call absolute coordinates for the latent space.  For this CelebA experiment, we chose to employ a Markov Chain within what we call relative coordinates of the latent space.  For this architecture, we have convenient access to three subsets of latent variables, $\boldsymbol\xi_{1}, \boldsymbol\xi_{2},$ and $\boldsymbol\xi_{3}$ within absolute coordinates.  Fixing $\boldsymbol\xi_{1}$ (resp. fixing $\boldsymbol\xi_{1}$ and $\boldsymbol\xi_{2}$) there is an invertible transformation $h_{2}(\boldsymbol\xi_{2};\boldsymbol\xi_{1})$ (resp. $h_{3}(\boldsymbol\xi_{3};\boldsymbol\xi_{1}, \boldsymbol\xi_{2})$) such that $h_{2}(\boldsymbol\xi_{2};\boldsymbol\xi_{1}) \sim \mathcal{N}(0, I)$ (resp. $h_{3}(\boldsymbol\xi_{3};\boldsymbol\xi_{1}, \boldsymbol\xi_{2}) \sim \mathcal{N}(0, I)$) under our flow's prior.  The Markov Chain in relative coordinates simply follows and proposes transitions for the triplet $(\boldsymbol\xi_{1}, h_{2}(\boldsymbol\xi_{2};\boldsymbol\xi_{1}), h_{3}(\boldsymbol\xi_{3};\boldsymbol\xi_{1}, \boldsymbol\xi_{2}))$.  

As it appears that no test set had been withheld during training of the model, we selected images at random from the full dataset for our experiment.   During inference with PL-MCMC, latent space transitions are generated within relative coordinates of the flow's latent space by small perturbations from the current latent state.  When perturbing the latent state, proposals are generated following a perturbation kernel, $g_{p}(\boldsymbol\xi'|\boldsymbol\xi)$, such that $\xi' \sim \mathcal{N}(\boldsymbol\xi, \sigma_{p}^{2}I)$.  The auxiliary distribution, $q$, is chosen to target $\mathbf{y}_{O} \sim \mathcal{N}(\mathbf{x}_{O}, \sigma_{a}^{2} I)$.  For this experiment, we select $\sigma_{p} = 0.01$ and $\sigma_{a} = 1 \times 10^{-3}$.  PL-MCMC is carried out over 25,000 proposals.

\subsection{Details Regarding Conditional Inference with MNIST Digits}
\label{sub:mnist_qual_detail}
In this experiment, we infer missing portions of MNIST \citep{lecun1998gradient} digits. The MNIST dataset is composed of $70,000$ monochrome $28 \times 28$ images of handwritten digits.  The standard training and test set split for the MNIST dataset is $60,000$ and $10,000$ images, respectively.  Our data missingness mechanisms are independent missingness, where pixels are lost uniformly at random, patch missingness, where randomly located contiguous rectangular blocks are missing,  and square observation, where only a randomly located contiguous square is observed.  

Our chosen normalizing flow is a variant of the NICE \citep{dinh2014nice} architecture.  Our implementation is a modification of that by \citet{nice}.  In the terminology of \citet{kingma2018glow}, the model has a depth of flow (K) of 5 and a total of 4 levels (L) and intermediate flow layers have a dimension of $1000$.  The flow utilizes an independent logistic prior distribution.  Rather than splitting even and odd pixels within coupling layers, we split between two randomly selected partitions that are chosen at the time of the flow's initialization and remain fixed for all layers of the flow.  Of course, better performance would be expected by selecting a flow architecture that best suits the spatial organization of image data.  However, random partitioning ensures that the expected performance of the flow remains independent of the spatial structuring of the data.

The normalizing flow is trained for $1000$ epochs over the standard $60,000$ element MNIST training set using RMSprop with a learning rate of $1 \times 10^{-5}$ and a momentum of $0.9$ and a batch size of $200$.  The data is pre-processed by performing pixel-wise whitening of the dataset (subtracting pixel-wise observed means and dividing by pixel-wise observed standard deviation).  The training procedure is intended to follow that used later for training from incomplete MNIST digits to serve as a baseline for comparison and is not intended to produce the best possible generative model of MNIST digits.  Examples of unconditioned samples from this model are provided within Figure \ref{fig:mnist_uncon1}, as obtained with the standard sampling variance (temperature $T = 1.0$, in the terminology of \citet{kingma2018glow})

\FloatBarrier
\begin{figure}[h!]
  \centering
  \includegraphics[width=5cm]{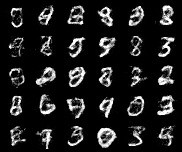}
  \caption{Unconditioned samples at standard variance from MNIST model.}
  \label{fig:mnist_uncon1}
\end{figure}
\FloatBarrier

During inference with PL-MCMC, latent space transitions are generated within the absolute coordinates of the flow's latent space, half of the time at random by small perturbations from the current latent state  and half of the time entirely resampled at a reduced standard deviation.  When perturbing the latent state, proposals are generated following a perturbation kernel, $g_{p}(\boldsymbol\xi'|\boldsymbol\xi)$, such that $\boldsymbol\xi' \sim \mathcal{N}(\boldsymbol\xi, \sigma_{p}^{2}I)$.  When completely resampling the latent state, proposals are generated following a resampling kernel, $g_{r}(\boldsymbol\xi'|\boldsymbol\xi)$, such that $\boldsymbol\xi' \sim \mathcal{N}(\mathbf{0}, \sigma_{r}^{2}I)$.  The auxiliary distribution, $q$, is chosen to target $\mathbf{y}_{O} \sim \mathcal{N}(\mathbf{x}_{O}, \sigma_{a}^{2} I)$.  For this experiment, we select $\sigma_{p} = 0.05$, $\sigma_{r} = 0.5$, and $\sigma_{a} = 1 \times 10^{-3}$.  To simplify calculation of Metropolis-Hastings acceptance probabilities, we employ the assumption that small displacements in the latent space result from $g_{p}(\boldsymbol\xi'|\boldsymbol\xi)$ while large displacements result from $g_{r}(\boldsymbol\xi'|\boldsymbol\xi)$, which is valid in the limit of $\sigma_{p} << \sigma_{r}$.  Therefore, rather than utilize the true transition kernel $g(\boldsymbol\xi'|\boldsymbol\xi)$, we simply assume that $g(\boldsymbol\xi'|\boldsymbol\xi) \propto g_{p}(\boldsymbol\xi'|\boldsymbol\xi)$ following a perturbation and $g(\boldsymbol\xi'|\boldsymbol\xi) \propto g_{r}(\boldsymbol\xi'|\boldsymbol\xi)$ following a resample.  PL-MCMC is carried out over 2,000 proposals.  To determine conditional expectations, the results of $20$ independent PL-MCMC chains are averaged together.

\section{Details Regarding Quantitative Experiments}
\label{app:E}
\subsection{Details Regarding Training from Incomplete MNIST Digits}

In this experiment, we train normalizing flows to model the distribution of MNIST digits from incomplete training data.  Our data missingness mechanisms are independent missingness, where pixels are lost uniformly at random, patch missingness, where randomly located contiguous rectangular blocks are missing,  and square observation, where only a randomly located contiguous square is observed.  For each missingness mechanism, we consider missingness rates of $0.3, 0.6,$ and $0.9$.  For training, we apply the missingness mechanism to the standard MNIST training set, resulting in a training set of $60,000$ incomplete digits.  For testing, we apply the missingness mechanism to the standard MNIST test set, resulting in a test set of $10,000$ incomplete digits.

Our chosen normalizing flow is a variant of the NICE \citep{dinh2014nice} architecture.  In the terminology of \citet{kingma2018glow}, the model has a depth of flow (K) of 5 and a total of 4 levels (L) and intermediate flow layers have a dimension of $1000$.  The flow utilizes an independent logistic prior distribution.  Rather than splitting even and odd pixels within coupling layers, we split between two randomly selected partitions that are chosen at the time of the flow's initialization and remain fixed for all layers of the flow.  The flow architecture and implementation is the same as used above for training from complete MNIST digits.  Better performance could be obtained by selecting an architecture that best suits the spatial organization of image data or by scaling model complexity along with the missingness rate (at a missingness rate of $0.9$, we have a tenth as much observed data available for training as in the complete data case).

In all cases, the normalizing flow is trained for $1000$ epochs using RMSprop with a learning rate of $1 \times 10^{-5}$ and a momentum of $0.9$ and a batch size of $200$.  The data is pre-processed by performing pixel-wise whitening of the dataset (subtracting pixel-wise observed means and dividing by pixel-wise observed standard deviation).  At each of the first 50 epochs of training, missing pixels are resampled following an independent normal distribution, such that $\mathbf{x}_{M} \sim \mathcal{N}(\mathbf{0}, I)$ (in whitened coordinates).  Every 50 epochs thereafter, missing values are resampled using PL-MCMC as applied to the flow being trained.  During inference with PL-MCMC, latent space transitions are generated within the absolute coordinates of the flow's latent space, half of the time at random by small perturbations from the current latent state  and half of the time entirely resampled at a reduced standard deviation.  When perturbing the latent state, proposals are generated following a perturbation kernel, $g_{p}(\boldsymbol\xi'|\boldsymbol\xi)$, such that $\boldsymbol\xi' \sim \mathcal{N}(\boldsymbol\xi, \sigma_{p}^{2}I)$.  When completely resampling the latent state, proposals are generated following a resampling kernel, $g_{r}(\boldsymbol\xi'|\boldsymbol\xi)$, such that $\boldsymbol\xi' \sim \mathcal{N}(\mathbf{0}, \sigma_{r}^{2}I)$.  The auxiliary distribution, $q$, is chosen to target $\mathbf{y}_{O} \sim \mathcal{N}(\mathbf{x}_{O}, \sigma_{a}^{2} I)$.  For computation of Metropolis-Hastings probabilities, we employ the same approximation as used when inferring MNIST digits within the qualitative experiments.  Throughout training, we use $\sigma_{p} = 0.05$ and $\sigma_{a} = 1 \times 10^{-3}$.  Within the first $500$ epochs, we utilize $\sigma_{p} = 1.814$.  After the first $500$ epochs, we resample at reduced variance with $\sigma_{p} = 0.5$.  These parameters and this training procedure were chosen because they provided acceptable performance when applied to training data with the moderate missingness rate of $0.6$.  It would certainly be beneficial to determine a more principled approach to their selection.  After PL-MCMC sampling, we clamp values between pixel-wise observed minimal and maximal values to produce a newly imputed training set for use in the next $50$ epochs of training.  Figure \ref{fig:mnist_train} below illustrates the progression of how the training set is imputed over training.

\FloatBarrier
\begin{figure}[h!]
  \centering
     \begin{subfigure}[b]{6cm}
         \centering
         \includegraphics[width=5cm]{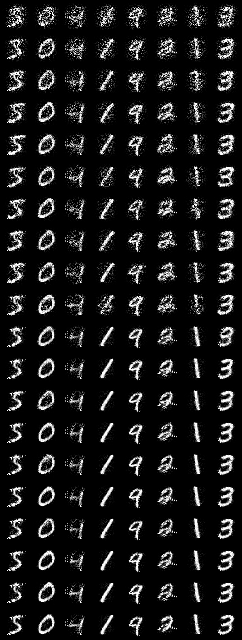}
         \caption{Independent Missingness}
     \end{subfigure}
     \begin{subfigure}[b]{6cm}
         \centering
         \includegraphics[width=5cm]{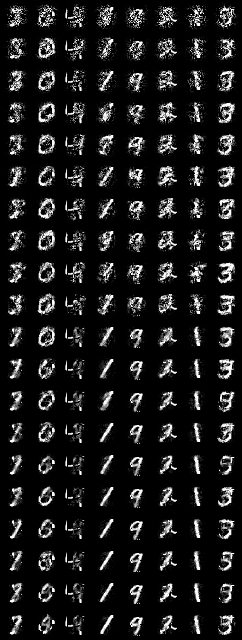}
         \caption{Patch Missingness}
     \end{subfigure}

  \caption{Example completions of MNIST digit training set by PL-MCMC during MC-EM training at missingness rates are $0.6$.  Initial completions are shown in the top row and completions at epoch $950$ are shown in the bottom row.}
  \label{fig:mnist_train}
\end{figure}
\FloatBarrier

For testing, we utilize the normalizing flows to reconstruct the $10,000$ element incomplete test sets.  For this reconstruction, we utilize PL-MCMC chains over $2,000$ proposals following the same procedure as in training.  During testing, we use $\sigma_{p} = 0.05$, $\sigma_{p} = 0.5$, and $\sigma_{a} = 1 \times 10^{-3}$.  To collect statistics for performance measures, we train three normalizing flows on distinctly prepared (i.e. resampled missingness patterns) training sets, each of which is tested on five distinctly prepared test sets.  We consider two methods of employing PL-MCMC to produce reconstructions of missing data.  The first method is imputation with a single sample from a PL-MCMC chain (imputation with a sample from the conditional distribution) and the second method is imputation with the average across multiple samples from independent PL-MCMC chains (imputation with the conditional mean).  As we average across the samples from $10$ independent PL-MCMC chains for the second method, our reported statistics for the first method encompass these additional $10$ replications for individual sample imputation performance.

As imputation performance measures, we consider per-pixel reconstruction RMSE and Fr\'echet Inception Distance \citep{heusel2017gans}.  To compute Fr\'echet Inception Distance, we use the implementation provided along with that of MisGAN \citep{misgan}.  Reconstruction RMSE is recorded within the original, unwhitened representation of pixel data.  When imputing $T$ test examples with $\mathcal{M}_{i}$ denoting the set of pixel indices that are missing for the $i$-th example and $\hat{x}_{i,j}$ denoting our imputed estimate for the $j$-th pixel of the $i$-th example (having ground truth value $x_{i,j}$), our reconstruction RMSE is calculated as:

\begin{equation*}
    Reconstruction \ RMSE = \frac{1}{T} \sum_{i = 1}^{T} \sqrt{\frac{1}{|\mathcal{M}_{i}|} \sum_{j \in \mathcal{M}_{i}} (x_{i,j} - \hat{x}_{i,j})^{2}}.
\end{equation*}

For comparison, we consider imputation using pixel-wise observed means and imputation using the convolutional variant of MisGAN \citep{li2018misgan}.  We use the implementation of MisGAN provided by \citet{misgan}.  The MisGAN models are trained following the provided default parameters (500 epochs with a batch size of 64  with $\tau = 0, \alpha=0.1, \beta=0.1, \gamma=0$, $\mathtt{maskgen}=\mathtt{fusion}$, $\mathtt{gp\_lambda}=10$, $\mathtt{n\_critic}=5$, and $\mathtt{n\_latent}=128$, with a three layer fully connected imputer network with 784 units in each layer).

\subsection{Details Regarding Training from Incomplete UCI Datasets}

In this experiment, we train normalizing flows to model the distributions of various continuous UCI datasets \citep{bache2013uci} from incomplete training data.  In all cases, our data missingness mechanism is independent missingness with a missingness rate of $0.5$.  A summary of the UCI datasets used in this experiment is provided below in Table \ref{tab:uci_table1}.  For training, we apply the missingness mechanism to the entirety of a single copy of the dataset.  For testing, we attempt to reconstruct the missing portions of the training set.

\FloatBarrier
\begin{table}[h!]
\caption{Summary of continuous UCI datasets used.}
\label{tab:uci_table1}
\centering
\begin{tabular}{rcc}
\toprule
Dataset & Num. Instances & Num. Attributes \\ \addlinespace
$\texttt{banknote}$ & 1372 & 4 \\
$\texttt{breast}$ & 569 & 30 \\
$\texttt{concrete}$ & 1030 & 9 \\
$\texttt{red-wine}$ & 1599 & 12 \\
$\texttt{white-wine}$ & 4898 & 12 \\
$\texttt{yeast}$ & 1483 & 8 \\ \bottomrule
\end{tabular}
\end{table}
\FloatBarrier

Our chosen normalizing flows are variants of the NICE \citep{dinh2014nice} architecture.  In the terminology of \citet{kingma2018glow}, all models have a depth of flow (K) of 5 and a total of 4 levels (L) and intermediate flow layers have a dimension of $120$.  The flows utilize an independent normal prior distribution.  Rather than splitting even and odd pixels within coupling layers, we split between two randomly selected partitions that are chosen at the time of the flow's initialization and remain fixed for all layers of the flow.  As our implementation works most easily with an even number of attributes, we copy the $\texttt{concrete}$ attributes and data missingness to double the number of attributes.  By copying data missingness patterns, we ensure that the doubling does not introduce additional information for training.  A summary of input attribute dimensions and training batch sizes is provided within Table \ref{tab:uci_table2}.

\FloatBarrier
\begin{table}[h!]
\caption{Summary of continuous UCI datasets used.}
\label{tab:uci_table2}
\centering
\begin{tabular}{rcc}
\toprule
Dataset & Input Dimensions & Batch Size \\ \addlinespace
$\texttt{banknote}$ &  4 & 3000\\
$\texttt{breast}$ &  30 & 1500\\
$\texttt{concrete}$ &  18 & 2000\\
$\texttt{red-wine}$ &  12 & 3000\\
$\texttt{white-wine}$ &  12 & 10000\\
$\texttt{yeast}$ &  8 & 3000\\ \bottomrule
\end{tabular}
\end{table}
\FloatBarrier

In all cases, the normalizing flow is trained for $1000$ epochs using Adamax with a learning rate of $0.002$, $\beta_{1} = 0.9$, and $\beta_{2} = 0.999$.  We duplicate the training sets (data missingness patterns included) $10$ times to form a larger training set without introducing additional information beyond that present in the original incomplete training set.  The batch sizes used are listed in Table \ref{tab:uci_table2}.  The data is pre-processed by performing attribute-wise whitening of the dataset (subtracting attribute-wise observed means and dividing by attribute-wise observed standard deviation).  At each of the first 50 epochs of training, missing attribute are resampled following an independent normal distribution, such that $\mathbf{x}_{M} \sim \mathcal{N}(\mathbf{0}, I)$ (in whitened coordinates).  Every 50 epochs thereafter, missing values are resampled using PL-MCMC as applied to the flow being trained.  During inference with PL-MCMC, latent space transitions are generated within the absolute coordinates of the flow's latent space, half of the time at random by small perturbations from the current latent state  and half of the time entirely resampled at a reduced standard deviation.  When perturbing the latent state, proposals are generated following a perturbation kernel, $g_{p}(\boldsymbol\xi'|\boldsymbol\xi)$, such that $\xi' \sim \mathcal{N}(\boldsymbol\xi, \sigma_{p}^{2}I)$.  When completely resampling the latent state, proposals are generated following a resampling kernel, $g_{r}(\boldsymbol\xi'|\boldsymbol\xi)$, such that $\boldsymbol\xi' \sim \mathcal{N}(\mathbf{0}, \sigma_{r}^{2}I)$.  The auxiliary distribution, $q$, is chosen to target $\mathbf{y}_{O} \sim \mathcal{N}(\mathbf{x}_{O}, \sigma_{a}^{2} I)$.  For computation of Metropolis-Hastings probabilities, we employ the same approximation as used when inferring MNIST digits within the qualitative experiments.  Throughout training, we use $\sigma_{p} = 0.01$, $\sigma_{r} = 1.0$, $\sigma_{a} = 1 \times 10^{-3}$.   These parameters and this training procedure were chosen because they provided acceptable performance across the UCI datasets considered.  It would certainly be beneficial to determine a more principled approach to their selection.  After PL-MCMC sampling, we clamp values between attribute-wise observed minimal and maximal values to produce a newly imputed training set for use in the next $50$ epochs of training.

For testing, we utilize the normalizing flows to reconstruct the missing values from their training sets.  For this reconstruction, we utilize PL-MCMC chains over $2,000$ proposals following the same procedure as in training.  During testing, we use $\sigma_{p} = 0.01$, $\sigma_{p} = 1.0$, and $\sigma_{a} = 1 \times 10^{-3}$.  To collect statistics for performance measures, we train five normalizing flows on distinctly prepared (i.e. resampled missingness patterns) training sets.  We consider two methods of employing PL-MCMC to produce reconstructions of missing data.  The first method is imputation with a single sample from a PL-MCMC chain (imputation with a sample from the conditional distribution) and the second method is imputation with the average across multiple samples from independent PL-MCMC chains (imputation with the conditional mean).  As we average across the samples from $25$ independent PL-MCMC chains for the second method, our reported statistics for the first method encompass these additional $25$ replications for individual sample imputation performance.  In the case of the $\texttt{concrete}$ dataset, we consider the copied attribute values as an additional single sample from the conditional distribution that is also incorporated into averaging.

As an imputation performance measure, we consider per-attribute normalized MSE.  When imputing $T$ test examples with $\mathcal{M}_{i}$ denoting the set of attribute indices that are missing for the $i$-th example and $\hat{x}_{i,j}$ denoting our imputed estimate for the $j$-th attribute of the $i$-th example (having ground truth value $x_{i,j}$), our normalized MSE is calculated as:

\begin{equation*}
    NMSE = \frac{1}{T} \sum_{i = 1}^{T} \frac{1}{|\mathcal{M}_{i}|} \sum_{j \in \mathcal{M}_{i}} (\frac{x_{i,j} - \hat{x}_{i,j}}{\sigma_{j}})^{2},
\end{equation*}

where $\sigma_{j}$ denotes the ground truth standard deviation of the $j$-th attribute.

For comparison, we consider imputation using attribute-wise observed means, imputation using the missForest \citep{stekhoven2012missforest} R package with default parameters, and imputation with VAEs using MIWAE \citep{mattei2019miwae}.  For imputation with missForrest, no data preprocessing is employed.  We use the implementation of MIWAE provided by \citet{miwae}.  In all cases, the VAE architecture employed has an intrinsic dimension $d$ of $10$, an encoder and decoder comprised of $3$ layers each with $128$ hidden units with ReLU activation functions, an independent normal prior, and a Student's $t$ distribution observation model.  In all cases, we utilize zero imputation as the MIWAE imputation function.  For training, we use $20$ importance weights while for inference we use $10,000$ importance weights.  We train the models using the provided default parameters ($2,000$ epochs using Adam with a learning rate of $0.001$ and a batch size of $64$).  In all cases, the data is pre-processed by performing attribute-wise whitening of the dataset (subtracting attribute-wise observed means and dividing by attribute-wise observed standard deviation).

\subsection{Details Regarding Sampling Efficiency for Inference of MNIST Digit}

In these experiments, we use MCMC to estimate the conditional expectation for the missing portion of a single MNIST digit.  In this case, the data missingness mechanism is a checkerboard pattern masking half of the digit, as shown in Figure \ref{fig:eff_dig}.  

\FloatBarrier
\begin{figure}[h!]
  \centering
     \begin{subfigure}[b]{4cm}
         \centering
         \includegraphics[width=4cm]{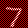}
     \end{subfigure}
  \caption{Masked digit used for sampling efficiency experiments.}
  \label{fig:eff_dig}
\end{figure}
\FloatBarrier

To perform this inference, we use the same normalizing flow as used in the qualitative experiments in Section \ref{sub:mnist_qual}, detailed within Appendix \ref{sub:mnist_qual_detail}.  During inference with PL-MCMC, latent space transitions are generated within the absolute coordinates of the flow's latent space, half of the time at random by small perturbations from the current latent state  and half of the time entirely resampled at a reduced standard deviation.  When perturbing the latent state, proposals are generated following a perturbation kernel, $g_{p}(\boldsymbol\xi'|\boldsymbol\xi)$, such that $\xi' \sim \mathcal{N}(\boldsymbol\xi, \sigma_{p}^{2}I)$.  When completely resampling the latent state, proposals are generated following a resampling kernel, $g_{r}(\boldsymbol\xi'|\boldsymbol\xi)$, such that $\boldsymbol\xi' \sim \mathcal{N}(\mathbf{0}, \sigma_{r}^{2}I)$.  The auxiliary distribution, $q$, is chosen to target $\mathbf{y}_{O} \sim \mathcal{N}(\mathbf{x}_{O}, \sigma_{a}^{2} I)$.  For this experiment, unless otherwise specified, we select $\sigma_{p} = 0.01$, $\sigma_{r} = 1.0$, and $\sigma_{a} = 1 \times 10^{-3}$.  For computation of Metropolis-Hastings probabilities, we employ the same approximation as used when inferring MNIST digits within the qualitative experiments.

Ideally, for comparison with PL-MCMC, we would consider employing a Metropolis-Hastings MCMC through the modeled data space proposing $\mathbf{x}_{M}' \sim \mathcal{N}(\mathbf{x}_{M}, \sigma_{MH}^{2}I)$, for some appropriately chosen $\sigma_{MH}$.  However, we found that naive Metropolis-Hastings through the modeled data space required such a small $\sigma_{MH}$ that the Markov Chain made no discernible progress.  We therefore resorted to per-missing-pixel Gibbs sampling, with missing pixel proposals generated following $x_{M, i}' \sim \mathcal{N}(\mu_{i}, \sigma_{i}^{2})$, with $\mu_{i}$ and $\sigma_{i}$ denoting the $i$-th missing pixel's observed mean and standard deviation throughout the training set.

For both techniques, the initial state of Markov Chain (latent state for PL-MCMC and missing pixel values for Gibbs sampling) is determined by unconditional sampling from the model at standard variance (temperature $T = 1.0$, in the terminology of \citet{kingma2018glow}).  In each replication, the conditional mean is estimated from the average of 100 independent Markov Chains.  To gather statistics regarding the variance of estimated conditional mean RMSE, we perform 10 separate replications of the experiment.

\FloatBarrier
\begin{figure}[h!]

  \centering
  \includegraphics[width=\textwidth]{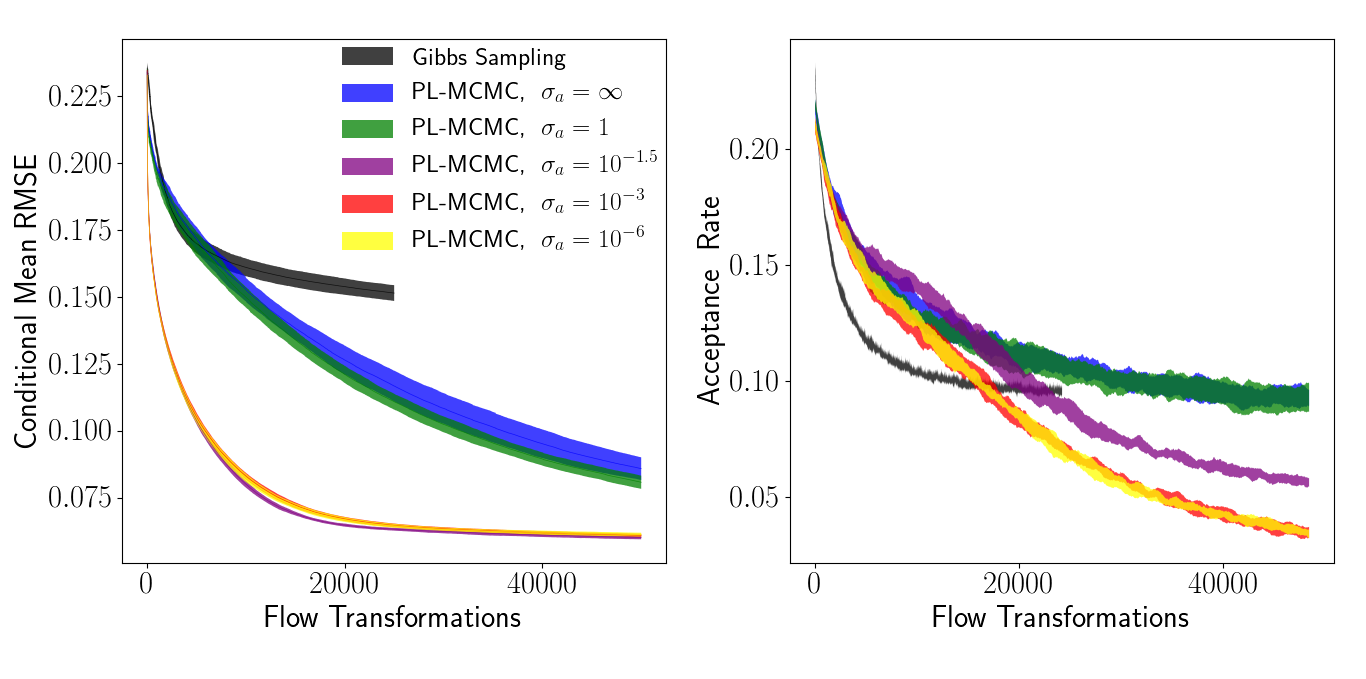}
  \caption{Single standard deviation envelopes of estimated conditional mean per-pixel RMSE and proposal acceptance rate for conditional sampling of MNIST digit.  PL-MCMC implementations only differ by choice of auxiliary density.}
 \label{fig:sampling_proposals_T}
\end{figure}
\FloatBarrier

\FloatBarrier
\begin{figure}[h!]

  \centering
  \includegraphics[width=\textwidth]{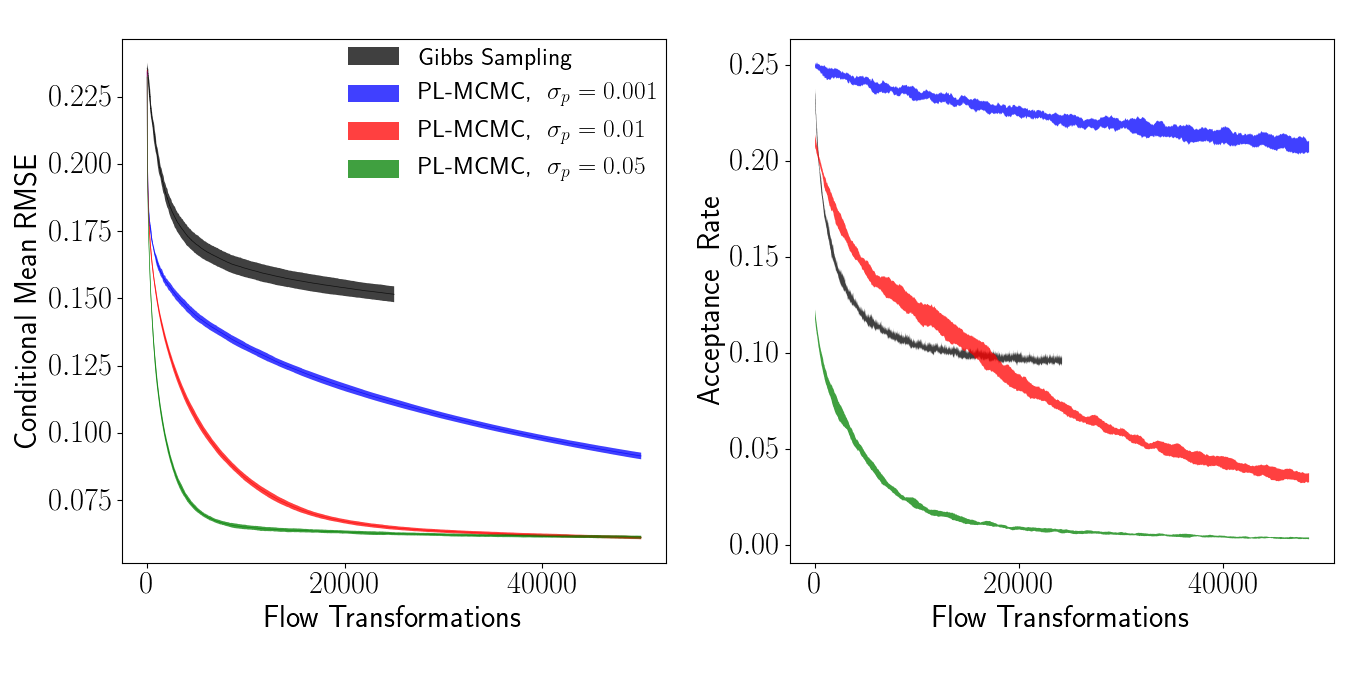}
  \caption{Single standard deviation envelopes of estimated conditional mean per-pixel RMSE and proposal acceptance rate for conditional sampling of MNIST digit.  PL-MCMC implementations only differ by the scale of perturbations used in their transition proposals.}
 \label{fig:sampling_perturbations_T}
\end{figure}
\FloatBarrier

As the evaluation of a PL-MCMC Metropolis-Hastings proposal involved two transformations through the normalizing flow while the evaluation of a Gibbs sampling proposal involves only one transformation, it can be argued that each PL-MCMC proposal was twice as costly as each Gibbs sample.  For this reason, Figures \ref{fig:sampling_proposals_T} and \ref{fig:sampling_perturbations_T} perform the same comparisons as Figures \ref{fig:sampling_proposals} and \ref{fig:sampling_perturbations}, but with respect to the number of flow transformations utilized in the Markov Chains.  We can see that, even accounting for the relative computational costs of the two methods, PL-MCMC offers significantly improved sampling performance.

Figures \ref{fig:sampling_noresample} and \ref{fig:sampling_noresample_T} compare the effect of altering proposal distribution scale when the transition proposals are generated only by the perturbation kernel.  For reference, the results from our default $\sigma_{p} = 0.01$, $\sigma_{r} = 1.0$, and $\sigma_{a} = 1 \times 10^{-3}$ PL-MCMC implementation are also included in red.  

\FloatBarrier
\begin{figure}[h!]

  \centering
  \includegraphics[width=\textwidth]{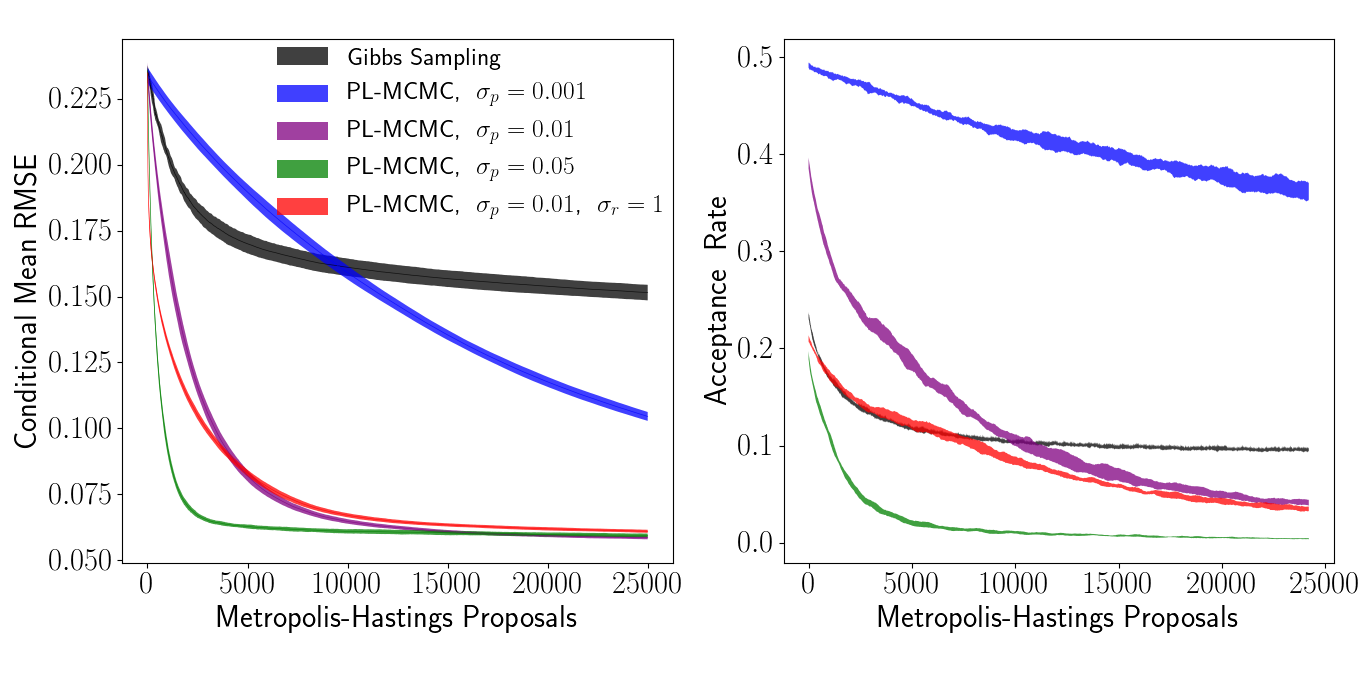}
  \caption{Single standard deviation envelopes of estimated conditional mean per-pixel RMSE and proposal acceptance rate for conditional sampling of MNIST digit.  Unless otherwise stated, PL-MCMC implementations only utilize a perturbation transition kernel and only differ by the scale of perturbations used in their transition proposals}
 \label{fig:sampling_noresample}
\end{figure}
\FloatBarrier

\FloatBarrier
\begin{figure}[h!]

  \centering
  \includegraphics[width=\textwidth]{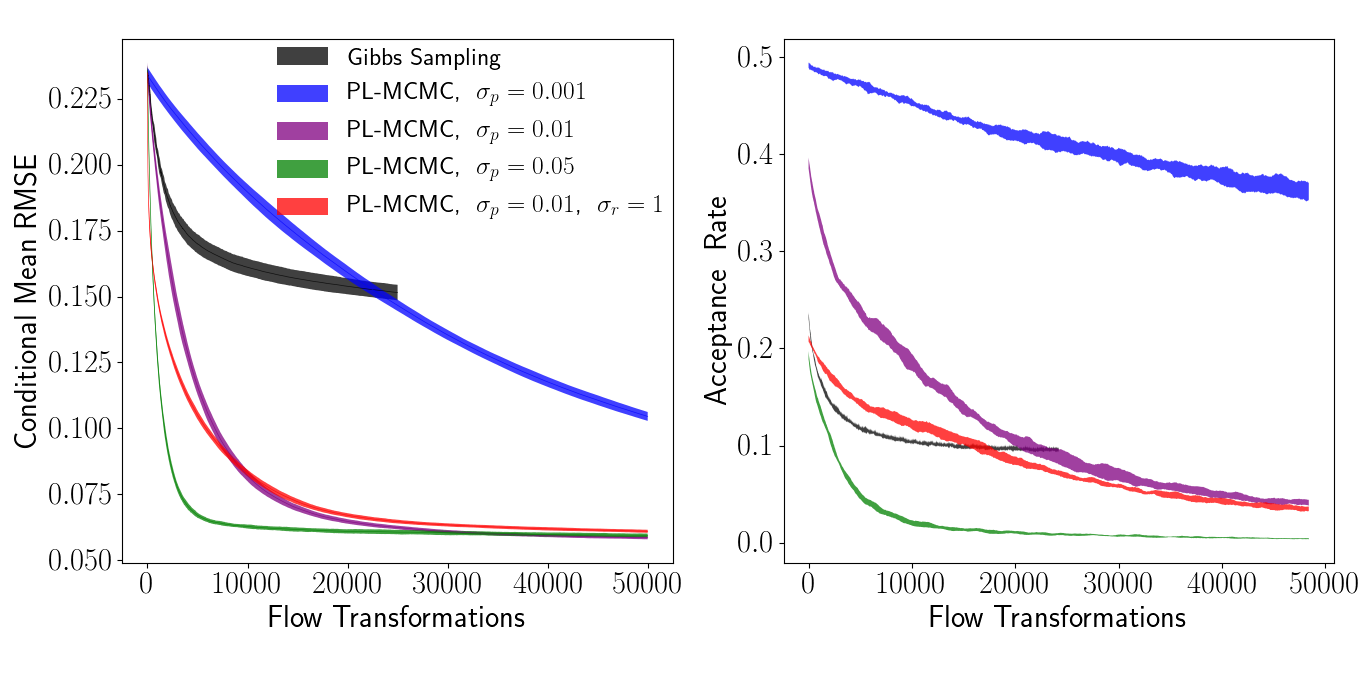}
  \caption{Single standard deviation envelopes of estimated conditional mean per-pixel RMSE and proposal acceptance rate for conditional sampling of MNIST digit.  Unless otherwise stated, PL-MCMC implementations only utilize a perturbation transition kernel and only differ by the scale of perturbations used in their transition proposals.}
 \label{fig:sampling_noresample_T}
\end{figure}
\FloatBarrier

Figures \ref{fig:sampling_resample} and \ref{fig:sampling_resample_T} compare the effect of altering the scale of the transition proposal's resampling kernel.

\FloatBarrier
\begin{figure}[h!]

  \centering
  \includegraphics[width=\textwidth]{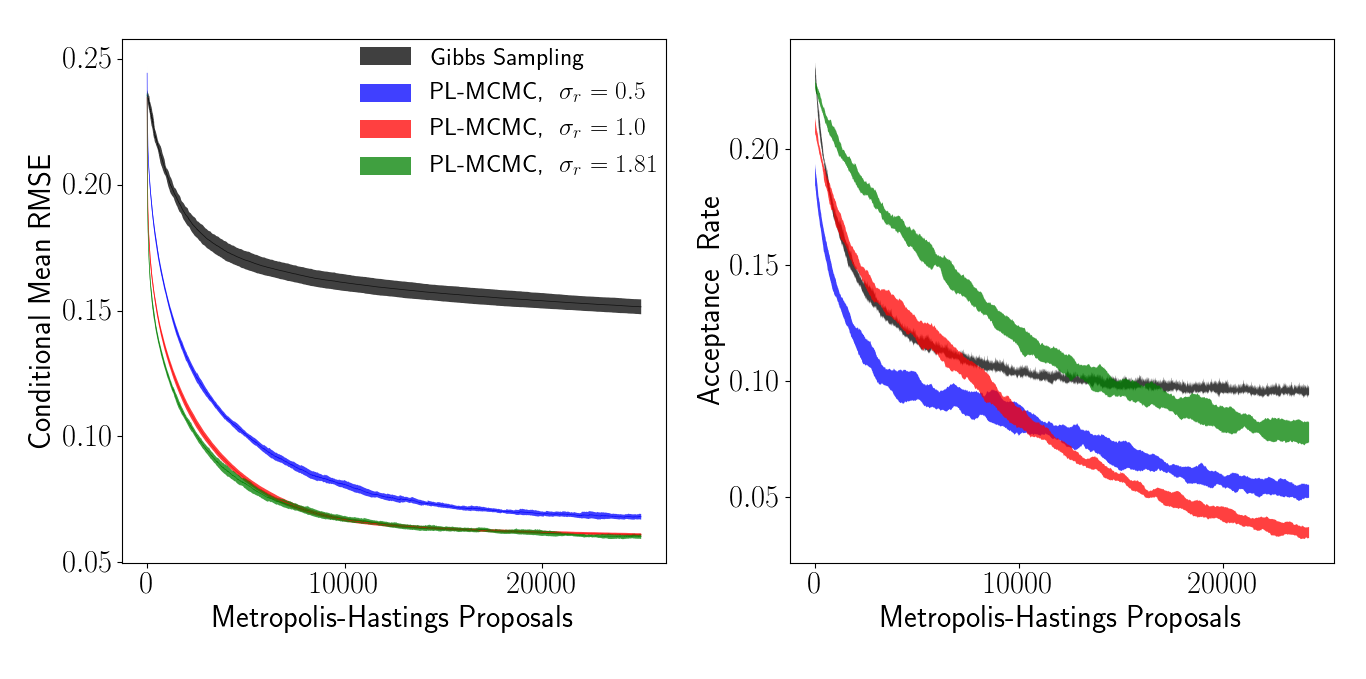}
  \caption{Single standard deviation envelopes of estimated conditional mean per-pixel RMSE and proposal acceptance rate for conditional sampling of MNIST digit.  PL-MCMC implementations only differ by the scale of the resampling kernel for their transition proposals.}
 \label{fig:sampling_resample}
\end{figure}
\FloatBarrier

\FloatBarrier
\begin{figure}[h!]

  \centering
  \includegraphics[width=\textwidth]{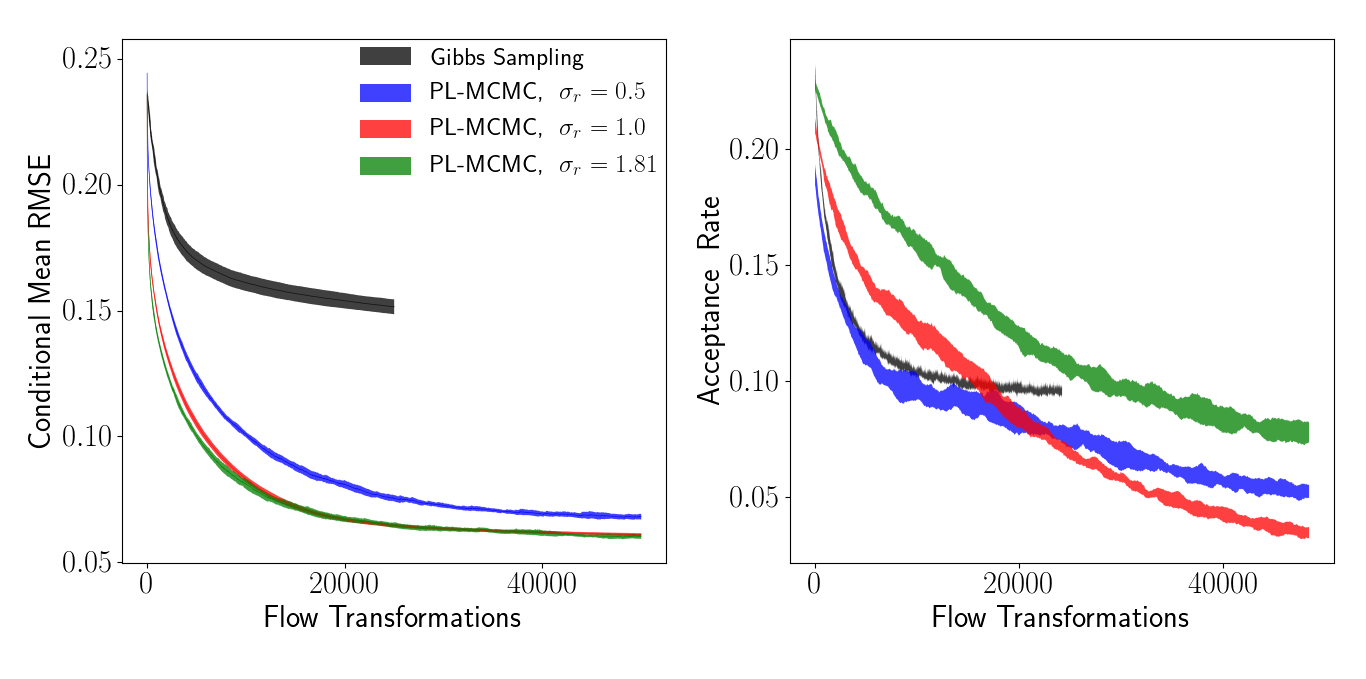}
  \caption{Single standard deviation envelopes of estimated conditional mean per-pixel RMSE and proposal acceptance rate for conditional sampling of MNIST digit.  PL-MCMC implementations only differ by the scale of the resampling kernel for their transition proposals.}
 \label{fig:sampling_resample_T}
\end{figure}
\FloatBarrier

\subsection{Restrictive Auxiliary Distributions Do Not Necessarily Reduce PL-MCMC To Stochastic Search}
\label{sub:change_study}

Given that our results from auxiliary distributions with standard deviations of $\sigma_{a} = 10^{-3}$ and $\sigma_{a} = 10^{-6}$ closely overlap, we may be concerned that the auxiliary distribution might dominate PL-MCMC's behavior and reduce the procedure to a simple search in the latent space to best rebuild the observed data.  To determine whether an auxiliary distribution with $\sigma_{a} = 10^{-3}$ overwhelmingly dominates the conditional sampling process, we follow a PL-MCMC implementation with $\sigma_{a} = 10^{-3}$ and determine how often the its decisions regarding proposal acceptance would be contradicted by an implementation with $\sigma_{a} = \infty$.  These results are provided within Figure \ref{fig:probchange}.

\FloatBarrier
\begin{figure}[h!]

  \centering
  \includegraphics[width=0.5\textwidth]{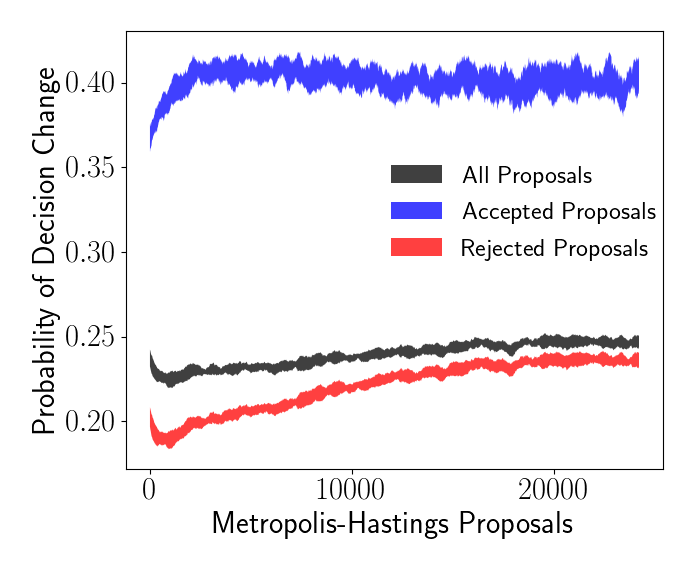}
  \caption{Probabilities that a PL-MCMC implementation with $\sigma_{a} = 10^{-3}$ makes proposal decisions that would be contradicted by an implementation with $\sigma_{a} = \infty$.  ``Accepted'' and ``Rejected'' proposals refer to the decisions made by the $\sigma_{a} = 10^{-3}$ implementation.}
 \label{fig:probchange}
\end{figure}
\FloatBarrier
Fundamentally, we see that, at least with $\sigma_{a} = 10^{-3}$ in this particular task, the two implementations are likely to agree in their decisions to accept or reject particular proposal transitions.  Choosing an auxiliary distribution with $\sigma_{a} = 10^{-3}$ does not overwhelmingly ``bully'' the Markov Chain into mindlessly reconstructing observed data.  Still, there is a substantial  probability that this choice of auxiliary distribution will alter the decision, which is the mechanism by which the auxiliary distribution helps to guide the Markov Chain and improve conditional sampling performance.  We suspect that these decision change probability computations could be useful for tuning the choice of auxiliary distribution.

\end{document}